\newtheorem{theorem}{Theorem}[section]
\newtheorem{lemma}[theorem]{Lemma}
\newtheorem{corollary}[theorem]{Corollary}
\def\E{\mathbb{E}}
\def\R{\mathbb{R}}
\def\A{\mathcal{A}}
\DeclarePairedDelimiter{\abs}{\lvert}{\rvert}
\DeclarePairedDelimiter{\norm}{\|}{\|}
\DeclarePairedDelimiter{\set}{ \{ }{ \} }
\newcommand{\bkt}[2]{\langle #1, #2 \rangle}
\newcommand{\eps}{\varepsilon}
\newcommand{\poly}{\mathrm{poly}}
\newcommand{\Vol}{\mathsf{Vol}}
\newcommand{\cg}{\mathsf{cg}}
\newcommand{\length}{\mathsf{length}}
\newcommand{\Conv}{\mathsf{Conv}}
\newcommand{\B}{\mathsf{B}}
\newcommand{\loss}{\mathsf{loss}}
\newcommand{\width}{\mathsf{width}}
\newcommand{\BR}{\mathsf{BR}}
\newcommand{\ball}{\B}
\newcommand{\Sph}{\mathbb{S}}
\newcommand{\X}{\ensuremath{\mathcal{X}}}
\DeclareMathOperator*{\argmax}{arg\,max}
\title{Contextual Recommendations and Low-Regret Cutting-Plane Algorithms}
\author{Sreenivas Gollapudi}
\author{ Guru Guruganesh}
\author{Kostas Kollias}
\author{Pasin Manurangsi}
\author{Renato Paes Leme}
\author{ Jon Schneider }
\affil{Google Research}
\begin{document}

\maketitle

\begin{abstract}
 
 We consider the following variant of contextual linear bandits motivated by routing applications in navigational engines  and recommendation systems. We wish to learn a hidden $d$-dimensional value $w^*$. Every round, we are presented with a subset $\mathcal{X}_t \subseteq \mathbb{R}^d$ of possible actions. If we choose (i.e. recommend to the user) action $x_t$, we obtain utility $\langle x_t, w^* \rangle$ but only learn the identity of the best action $\arg\max_{x \in \X_t} \langle x, w^* \rangle$.

We design algorithms for this problem which achieve regret $O(d\log T)$ and $\exp(O(d \log d))$. To accomplish this, we design novel cutting-plane algorithms with low “regret” -- the total distance between the true point $w^*$ and the hyperplanes the separation oracle returns. 

We also consider the variant where we are allowed to provide a list of several recommendations. In this variant, we give an algorithm with $O(d^2 \log d)$ regret and list size $\poly(d)$. Finally, we construct nearly tight algorithms for a weaker variant of this problem where the learner only learns the identity of an action that is better than the recommendation. Our results rely on new algorithmic techniques in convex geometry (including a variant of Steiner’s formula for the centroid of a convex set) which may be of independent interest. 
\end{abstract}

\section{Introduction}

Consider the following problem faced by a geographical query service (e.g. Google Maps). When a user searches for a path between two endpoints, the service must return one route out of a set of possible routes. Each route has a multidimensional set of features associated with it, such as (i) travel time, (ii) amount of traffic, (iii) how many turns it has, (iv) total distance, etc.  The service must recommend one route to the user, but doesn’t a priori know how the user values these features relative to one another. However, when the service recommends a route, the service can observe some feedback from the user: whether or not the user followed the recommended route (and if not, which route the user ended up taking). How can the service use this feedback to learn the user’s preferences over time?

Similar problems are faced by recommendation systems in general, where every round a user arrives accompanied by some contextual information (e.g. their current search query, recent activity, etc.), the system makes a recommendation to the user, and the system can observe the eventual action (e.g. the purchase of a specific item) by the user. These problems can be viewed as specific cases of a variant of linear contextual bandits that we term \textit{contextual recommendation}. 

In contextual recommendation, there is a hidden vector $w^* \in \R^d$ (e.g. representing the values of the user for different features) that is unknown to the learner. Every round $t$ (for $T$ rounds), the learner is presented with an adversarially chosen (and potentially very large) set of possible actions $\mathcal{X}_t$. Each element $x_t$ of $\mathcal{X}_t$ is also an element of $\R^d$ (visible to the learner); playing action $x_t$ results in the learner receiving a reward of $\bkt{ x_t}{ w^*}$. The learner wishes to incur low regret compared to the best possible strategy in hindsight -- i.e. the learner wishes to minimize

\begin{equation}\label{eq:contextual-selection-regret}
\mathrm{Reg} = \sum_{t=1}^{T} \left(\langle x^*_t, w^*\rangle - \langle x_t, w^*\rangle\right),
\end{equation}

\noindent
where $x^*_t = \arg\max_{x \in \mathcal{X}_t} \bkt{ x}{ w^*}$ is the best possible action at time $t$. In our geographical query example, this regret corresponds to the difference between the utility of a user that always blindly follows our recommendation and the utility of a user that always chooses the optimal route. 

Thus far this agrees with the usual set-up for contextual linear bandits (see e.g. \cite{chu2011contextual}). Where contextual recommendation differs from this is in the feedback available to the learner: whereas classically in contextual linear bandits the learner learns (a possibly noisy version of) the reward they receive each round, in contextual recommendation the learner instead learns \textit{the identity of the best arm $x^*_t$}. This altered feedback makes it difficult to apply existing algorithms for linear contextual bandits. In particular, algorithms like LINUCB and LIN-Rel \cite{chu2011contextual, auer2002using} all require estimates of $\langle x_t, w^* \rangle$ in order to learn $w^*$ over time, and our feedback prevents us from obtaining any such absolute estimates. 

In this paper we design low-regret algorithms for this problem. We present two algorithms for this problem: one with regret $O(d\log T)$ and one with regret $\exp(O(d\log d))$ (Theorems \ref{thm:expdlogd} and \ref{thm:dlogt}). Note that both regret guarantees are independent of the number of offered actions $|\mathcal{X}_t|$ (the latter even being independent of the time horizon $T$). Moreover both of these algorithms are efficiently implementable given an efficient procedure for optimizing a linear function over the sets $\mathcal{X}_t$. This condition holds e.g. in the example of recommending shortest paths that we discussed earlier.

In addition to this, we consider two natural extensions of contextual recommendation where the learner is allowed to recommend a bounded subset of actions instead of just a single action (as is often the case in practice). In the first variant, which we call \emph{list contextual recommendation}, each round the learner recommends a set of at most $L$ (for some fixed $L$) actions to the learner. The learner still observes the user's best action each round, but the loss of the learner is now the difference between the utility of the best action for the user and the best action offered by the learner (capturing the difference in utility between a user playing an optimal action and a user that always chooses the best action the learner offers).

In list contextual recommendation, the learner has the power to cover multiple different user preferences simultaneously (e.g. presenting the user with the best route for various different measures). We show how to use this power to construct an algorithm for the learner which offers $\poly(d)$ actions each round and obtain a total regret of $O(\poly(d))$. 

In the second variant, we relax an
assumption of both previous models: that the user will always choose their best possible action (and hence that we will observe their best possible action). To relax this assumption, we also consider the following weaker version of contextual recommendation we call \textit{local contextual recommendation}.

In this problem, the learner again recommends a set of at most $L$ actions to the learner (for some $L > 1$)\footnote{Unlike in the previous two variants, it is important in local contextual recommendation that $L > 1$; if $L=1$ then the user can simply report the action the learner recommended and the learner receives no meaningful feedback.}. The user then chooses an action which is at least as good as the best action in our list, and we observe this action. In other words, we assume the learner at least looks at all the options we offer, so if they choose an external option, it must be better than any offered option (but not necessarily the global optimum). Our regret in this case is the difference between the total utility of a learner that always follows the best recommendation in our list and the total utility of a learner that always plays their optimal action\footnote{In fact, our algorithms all work for a slightly stronger notion of regret, where the benchmark is the utility of a learner that always follows the \textit{first} (i.e. a specifically chosen) recommendation on our list. With this notion of regret, contextual recommendation reduces to local contextual recommendation with $L = \max |\mathcal{X}_t|$.}. 

Let $A = \max_{t} |\mathcal{X}_t|$ be a bound on the total number of actions offered in any round, and let $\gamma = A/(L-1)$. Via a simple reduction to contextual recommendation, we construct algorithms for local contextual recommendation with regret $O(\gamma d\log T)$ and $\gamma\exp(O(d\log d))$. We further show that the first bound is ``nearly tight'' (up to $\poly(d)$ factors) in some regimes; in particular, we demonstrate an instance where $L = 2$ and $K = 2^{\Omega(d)}$ where any algorithm must incur regret at least $\min(2^{\Omega(d)}, \Omega(T))$ (\Cref{thm:locallb}).

\subsection{Low-regret cutting plane methods and contextual search}\label{sec:introlrcp}

To design these low-regret algorithms, we reduce the problem of contextual recommendation to a geometric online learning problem (potentially of independent interest). We present two different (but equivalent) viewpoints on this problem: one motivated by designing separation-oracle-based algorithms for convex optimization, and the other by contextual search.

\subsubsection{Separation oracles and cutting-plane methods}

Separation oracle methods (or ``cutting-plane methods'') are an incredibly well-studied class of algorithms for linear and convex optimization. For our purposes, it will be convenient to describe cutting-plane methods as follows. 

Let $\ball = \{w \in \R^d \mid\, \norm{w} \leq 1\}$ be the unit ball in $\R^d$. We are searching for a hidden point $w^* \in \ball$. Every round we can choose a point $p_t \in \ball$ and submit this point to a \textit{separation oracle}. The separation oracle then returns a half-space separating $p_t$ from $w^*$; in particular, the oracle returns a direction $v_t$ such that $\langle w^*, v_t \rangle \geq \langle p_t, v_t\rangle$. 

Traditionally, cutting-plane algorithms have been developed to minimize the number of calls to the separation oracle until the oracle returns a hyperplane that passes within some distance $\delta$ of $w^*$. For example, the ellipsoid method (which always queries the center of the currently-maintained ellipse) has the guarantee that it makes at most $O(d^2 \log 1/\delta)$ oracle queries before finding such a hyperplane.

In our setting, instead of trying to minimize the number of separation oracle queries before finding a ``close'' hyperplane, we would like to minimize the total (over all $T$ rounds) distance between the returned hyperplanes and the hidden point $w^*$. That is, we would like to minimize the expression

\begin{equation}\label{eq:cutting-plane-reg}
\mathrm{Reg}'= \sum_{t=1}^{T} \left( \langle w^*, v_t \rangle - \langle p_t, v_t \rangle\right).
\end{equation}

Due to the similarity between \eqref{eq:cutting-plane-reg} and \eqref{eq:contextual-selection-regret}, we call this quantity the \textit{regret} of a cutting-plane algorithm. We show that, given any low-regret cutting-plane algorithm, there exists a low-regret algorithm for contextual recommendation.

\begin{theorem}[Restatement of Theorem \ref{thm:reduction}]\label{thm:intro_reduction}
Given a low-regret cutting-plane algorithm $\mathcal{A}$ with regret $\rho$, we can construct an $O(\rho)$-regret algorithm for contextual recommendation. 
\end{theorem}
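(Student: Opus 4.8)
The plan is to run the given low-regret cutting-plane algorithm $\A$ internally, interpreting the points it queries as running estimates of $w^*$ and manufacturing separation-oracle responses out of the contextual-recommendation feedback. Throughout we use the normalization $w^* \in \ball$ and $\mathcal{X}_t \subseteq \ball$ for every $t$ (as in the cutting-plane setup), together with the assumed oracle for maximizing a linear function over each $\mathcal{X}_t$. Concretely, in round $t$ let $p_t \in \ball$ be the point $\A$ wishes to submit to its separation oracle; we recommend the action $x_t = \argmax_{x \in \mathcal{X}_t}\bkt{x}{p_t}$, i.e.\ the action that would be optimal if $w^*$ equalled our current estimate $p_t$. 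The feedback then reveals $x^*_t = \argmax_{x \in \mathcal{X}_t}\bkt{x}{w^*}$, and we hand $v_t = (x^*_t - x_t)/2$ back to $\A$ as the oracle's answer (dividing by $2$ only so that $v_t \in \ball$).

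The main thing to check is that $v_t$ is a legitimate separating direction, i.e.\ $\bkt{w^*}{v_t} \ge \bkt{p_t}{v_t}$. Optimality of $x^*_t$ for $w^*$ over $\mathcal{X}_t \ni x_t$ gives $\bkt{w^*}{x^*_t - x_t} \ge 0$, and optimality of $x_t$ for $p_t$ over $\mathcal{X}_t \ni x^*_t$ gives $\bkt{p_t}{x^*_t - x_t} \le 0$; hence $\bkt{w^*}{v_t} \ge 0 \ge \bkt{p_t}{v_t}$, so $v_t$ is a valid response (the degenerate case $x_t = x^*_t$, where $v_t = 0$, is harmless). Since this argument is valid for whatever set $\mathcal{X}_t$ the adversary presents --- even chosen adaptively --- the sequence $(p_t, v_t)_t$ is a transcript that $\A$ could encounter against a worst-case separation oracle, so $\A$'s regret guarantee applies: $\sum_{t=1}^T \big(\bkt{w^*}{v_t} - \bkt{p_t}{v_t}\big) \le \rho$.

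It remains to relate the two regrets, which we do term by term. The contextual-recommendation loss incurred in round $t$ is $\bkt{x^*_t}{w^*} - \bkt{x_t}{w^*} = \bkt{w^*}{x^*_t - x_t} = 2\bkt{w^*}{v_t}$, whereas the cutting-plane loss in round $t$ is $\bkt{w^*}{v_t} - \bkt{p_t}{v_t} \ge \bkt{w^*}{v_t}$, using $\bkt{p_t}{v_t} \le 0$. Summing over $t$, the total contextual-recommendation regret is $\mathrm{Reg} = 2\sum_t \bkt{w^*}{v_t} \le 2\sum_t \big(\bkt{w^*}{v_t} - \bkt{p_t}{v_t}\big) \le 2\rho = O(\rho)$, as claimed.

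The reduction itself is short, and the only genuinely delicate step is the verification above that $x^*_t - x_t$ always points from $p_t$ toward $w^*$; this is exactly where the two $\argmax$ choices play off one another, and it is the crux of the proof. A secondary point is that an adversary choosing $\mathcal{X}_t$ adaptively makes the manufactured oracle adaptive as well, so one needs $\A$'s regret bound to hold against adaptive separation oracles --- the standard regime. The only loss relative to an exact term-by-term correspondence is the rescaling of $x^*_t - x_t$ into $\ball$, which is what turns the bound into $O(\rho)$ rather than $\rho$ exactly.
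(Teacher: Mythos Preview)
Your reduction is essentially the paper's: play $x_t \in \BR_t(p_t)$, feed back the direction of $x_t^* - x_t$ to $\A$, and use the two optimality inequalities ($x_t^*$ optimal for $w^*$, $x_t$ optimal for $p_t$) to show both that this is a valid separating direction and that the per-round contextual regret is at most twice the per-round cutting-plane regret. The only cosmetic differences are that the paper normalizes $v_t = (x_t^* - x_t)/\norm{x_t^* - x_t}$ to a unit vector (its separation-oracle model requires $\norm{v_t} = 1$, not merely $v_t \in \ball$) and explicitly skips rounds with $x_t = x_t^*$ for $\A$ rather than handing it the zero vector; neither point affects the substance of your argument.
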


This poses a natural question: what regret bounds are possible for cutting-plane methods? One might expect guarantees on existing cutting-plane algorithms to transfer over to regret bounds, but interestingly, this does not appear to be the case. In particular, most existing cutting-plane methods and analysis suffers from the following drawback: even if the method is likely to find a hyperplane within distance $\delta$ relatively quickly, there is no guarantee that subsequent calls to the oracle will return low-regret hyperplanes.

In this paper, we will show how to design low-regret cutting-plane methods. Although our final algorithms will bear some resemblance to existing cutting-plane algorithms (e.g. some involve cutting through the center-of-gravity of some convex set), our analysis will instead build off more recent work on the problem of \textit{contextual search}.

\subsubsection{Contextual search}

Contextual search is an online learning problem initially motivated by applications in pricing \cite{paesleme2018contextual}. The basic form of contextual search can be described as follows. As with the previously mentioned problems, there is a hidden vector $w^* \in [0, 1]^d$ that we wish to learn over time. Every round the adversary provides the learner with a vector $v_t$ (the ``context''). In response, the learner must guess the value of $\bkt{ v_t}{w^*}$, submitting a guess $y_t$. The learner then incurs a loss of $|\bkt{v_t}{w^*} - y_t|$ (the distance between their guess and the true value of the inner product), but only learns whether $\bkt{v_t}{w^*}$ is larger or smaller than their guess.

The problem of designing low-regret cutting plane methods can be interpreted as a  ``context-free'' variant of contextual search. In this variant, the learner is no longer provided the context $v_t$ at the beginning of each round, and instead of guessing the value of $\bkt{v_t}{ w^*}$, they are told to directly submit a guess $p_t$ for the point $w^*$. The context $v_t$ is then revealed to them \textit{after} they submit their guess, where they are then told whether $\bkt{p_t}{w^*}$ is larger or smaller than $\bkt{v_t}{w^*}$ and incur loss $|\bkt{v_t}{w^*} - \bkt{p_t}{w^*}|$. Note that this directly corresponds to querying a separation oracle with the point $p_t$, and the separation oracle returning either the halfspace $v_t$ (in the case that $\bkt{w^*}{v_t} \geq \bkt{w^*}{p_t}$) or the halfspace $-v_t$ (in the case that $\bkt{w^*}{v_t} \leq \bkt{w^*}{p_t}$). 

One advantage of this formulation is that (unlike in standard analyses of cutting-plane methods) the total loss in contextual search directly matches  the expression in \eqref{eq:cutting-plane-reg} for the regret of a cutting-plane method. In fact, were there to already exist an algorithm for contextual search which operated in the above manner -- guessing $\bkt{v_t}{w^*}$ by first approximating $w^*$ and then computing the inner product -- we could just apply this algorithm verbatim and get a cutting-plane method with the same regret bound. Unfortunately, both the algorithms of \cite{liu2021optimal} and \cite{paesleme2018contextual} explicitly require knowledge of the direction $v_t$.

This formulation also raises an interesting subtlety in the power of the separation oracle: specifically, whether the direction $v_t$ is fixed (up to sign) ahead of time or is allowed to depend on the point $p$. Specifically, we consider two different classes of separation oracles. For \textit{(strong) separation oracles}, the direction $v_t$ is allowed to freely depend on the point $p_t$ (as long as it is indeed true that $\bkt{w^*}{v_t} \geq \bkt{p_t}{v_t}$). For \textit{weak separation oracles}, the adversary fixes a direction $u_t$ at the beginning of the round, and then returns either $v_t = u_t$ or $v_t = -u_t$ (depending on the sign of $\bkt{w^* - p_t}{u_t}$). The strong variant is most natural when comparing to standard separation oracle guarantees (and is necessary for the reduction in Theorem \ref{thm:intro_reduction}), but for many standalone applications (especially those motivated by contextual search) the weak variant suffices. In addition, the same techniques we use to construct a cutting-plane algorithm for weak separation oracles will let us design low-regret algorithms for list contextual recommendation.


\subsection{Our results and techniques}

We design the following low-regret cutting-plane algorithms:

\begin{enumerate}
    \item An $\exp(O(d\log d))$-regret cutting-plane algorithm for strong separation oracles.
    \item An $O(d\log T)$-regret cutting-plane algorithm for strong separation oracles.
    \item An $O(\poly(d))$-regret cutting-plane algorithm for weak separation oracles. 
\end{enumerate}

All three algorithms are efficiently implementable (in $\poly(d, T)$ time). Through Theorem \ref{thm:intro_reduction}, points (1) and (2) immediately imply the algorithms with regret $\exp(O(d))$ and $O(d\log T)$ for contextual recommendation. Although we do not have a blackbox reduction from weak separation oracles to algorithms for list contextual recommendation, we show how to apply the same ideas in the algorithm in point (3) to construct an $O(d^2\log d)$-regret algorithm for list contextual recommendation with $L = \poly(d)$. 

To understand how these algorithms work, it is useful to have a high-level understanding of the algorithm of \cite{liu2021optimal} for contextual search. That algorithm relies on a multiscale potential function the authors call the \textit{Steiner potential}. The Steiner potential at scale $r$ is given by the expression $\Vol(K_t + r\B)$, where $K_t$ (the ``knowledge set'') is the current set of possibilities for the hidden point $w^*$, $\B$ is the unit ball, and addition denotes Minkowsi sum; in other words, this is the volume of the set of points within distance $r$ of $K_t$. The authors show that by choosing their guess $y_t$ carefully, they can decrease the $r$-scale Steiner potential (for some $r$ roughly proportional to the width of $K_t$ in the current direction $v_t$) by a constant factor. In particular, they show that this is achieved by choosing $y_t$ so to divide the expanded set $K_t + r\B$ exactly in half by volume. Since the Steiner potential at scale $r$ is bounded below by $\Vol(r\B)$, this allows the authors to bound the total number of mistakes at this scale. (A more detailed description of this algorithm is provided in \Cref{sect:csearch}). 

In the separation oracle setting, we do not know $v_t$ ahead of time, and thus cannot implement this algorithm as written. For example, we cannot guarantee our hyperplane splits $K_t + r\B$ exactly in half. We partially work around this by using (approximate variants of) Grunbaum's theorem, which guarantees that any hyperplane through the center-of-gravity of a convex set splits that convex set into two pieces of roughly comparable volume. In other words, everywhere where the contextual search algorithm divides the volume of $K_t + r\B$ in half, Grunbaum's theorem implies we obtain comparable results by choosing any hyperplane passing through the center-of-gravity of $K_t + r\B$.

Unfortunately, we still cannot quite implement this in the separation oracle setting, since the choice of $r$ in the contextual search algorithm depends on the input vector $v_t$. Nonetheless, by modifying the analysis of contextual search we can still get some guarantees via simple methods of this form. In particular we show that always querying the center-of-gravity of $K_t$ (alternatively, the center of the John ellipsoid of $K_t$) results in an $\exp(O(d\log d))$-regret cutting-plane algorithm, and that always querying the center of gravity of $K_t + \frac{1}{T}\B$ results in an $O(d\log T)$-regret cutting-plane algorithm.

Our cutting-plane algorithm for weak separation oracles requires a more nuanced understanding of the family of sets of the form $K_t + r\B$. This family of sets has a number of surprising algebraic properties. One such property (famous in convex geometry and used extensively in earlier algorithms for contextual search) is \textit{Steiner's formula}, which states that for any convex $K$, $\Vol(K + r\B)$ is actually a polynomial in $r$ with nonnegative coefficients. These coefficients are called \textit{intrinsic volumes} and capture various geometric measures of the set $K$ (including the volume and surface area of $K$).

There exists a lesser-known analogue of Steiner's formula for the center-of-gravity of $K + r\B$, which states that each coordinate of $\cg(K + r\B)$ is a rational function of degree at most $d$; in other words, the curve $\cg(K + r\B)$ for $r \in [0, \infty)$ is a rational curve. Moreover, this variant of Steiner's formula states that each point $\cg(K + r\B)$ can be written as a convex combination of $d+1$ points contained within $K$ known as the \textit{curvature centroids} of $K$. Motivated by this, we call the curve $\rho_K(r) = \cg(K + r\B)$ the \textit{curvature path} of $K$. 

Since the curvature path $\rho_K$ is both bounded in algebraic degree and bounded in space (having to lie within the convex hull of the curvature centers), we can bound the total length of the curvature path $\rho_K$ by a polynomial in $d$ (since it is bounded in degree, each component function of $\rho_K$ can switch from increasing to decreasing a bounded number of times). This means that we can discretize the curvature path to within precision $\eps$ while only using $\poly(d)/\eps$ points on the path. 

Our algorithms against weak separation oracles and for list contextual recommendation both make extensive use of such a discretization. For example, we show that in order to construct a low-regret algorithm against a weak separation oracle, it suffices to discretize $\rho_{K_t}$ into $O(d^4)$ points and then query a random point; with probability at least $O(d^{-4})$, we will closely enough approximate the point $\rho(r) = \cg(K + r\B)$ that our above analogue of contextual search would have queried. We show this results in $\poly(d)$ total regret\footnote{The reason this type of algorithm does not work against strong separation oracles is that each point in this discretization could return a different direction $v_t$, in turn corresponding to a different value of $r$}. A similar strategy works for list contextual recommendation: there we discretize the curvature path for the knowledge set $K_t$ into $\poly(d)$ candidate values for $w^*$, and then submit as our set of actions the best response for each of these candidates. 


\subsection{Related work}
There is a very large body of work on recommender systems which employs a wide range of different techniques -- for an overview, see the survey by Bobadilla et al. \cite{bobadilla2013recommender}. Our formulation in this paper is closest to treatments of recommender systems which formulate the problem as an online learning problem and attack it with tools such as contextual bandits or reinforcement learning. Some examples of such approaches can be seen in \cite{li2010contextual, li2011unbiased, tang2014ensemble, song2014online, warlop2018fighting}. Similarly, there is a wide variety of work on online shortest path routing \cite{zou2014online, awerbuch2004adaptive,gyorgy2006adaptive,gyorgy2007line, kveton2015combinatorial,talebi2017stochastic} which also applies tools from online learning. One major difference between these works and the setting we study in our paper is that these settings often rely on some quantitative feedback regarding the quality of item recommended. In contrast, our paper only relies on qualitative feedback of the form ``action $x$ is the best action this round'' or ``action $x$ is is at least as good as any action recommended''.

One setting in the bandits literature that also possesses qualitative feedback is the setting of Duelling Bandits~\cite{yue2012k}. In this model, the learner can submit a pair of actions and the feedback is a noisy bit signalling which action is better. However, their notion of regret (essentially, the probability the best arm would be preferred over the arms chosen by the learner) significantly differs from the notion of regret we measure in our setting (the loss to the user by following our recommendations instead of choosing the optimal actions).

Cutting-plane methods have a long and storied history in convex optimization. The very first efficient algorithms for linear programming (based on the ellipsoid method \cite{khachiyan1979polynomial,grotschel1981ellipsoid}). Since then, there has been much progress in designing more efficient cutting-plane methods (e.g.~\cite{bubeck-lee-singh}), but the focus remains on the number of calls to the separating oracle or the total running time of the algorithm. We are not aware of any work which studies cutting-plane methods under the notion of regret that we introduce in \Cref{sec:introlrcp}.

Contextual search was first introduced in the form described in \Cref{sect:csearch} in~\cite{paesleme2018contextual}, where the authors gave the first time-horizon-independent regret bound of $O(\poly(d))$ for this problem (earlier work by \cite{lobel2016multidimensional} and \cite{cohen2016feature} indirectly implied bounds of $O(\poly(d)\log T)$ for this problem). This was later improved by~\cite{liu2021optimal} to a near-optimal $O(d\log d)$ regret bound. The algorithms of both~\cite{paesleme2018contextual,liu2021optimal} rely on techniques from integral geometry, and specifically on understanding the intrinsic volumes and Steiner polynomial of the set of possible values for $w^*$. Some related geometric techniques have been used in recent work on the convex body chasing problem\cite{bubeck2020chasing,sellke2020chasing,argue2020chasing}. To our knowledge, our paper is the first paper to employ the fact that the curvature path $\cg(K + r\B)$ is a bounded rational curve (and thus can be efficiently discretized) in the development of algorithms.

\section{Model and preliminaries}

We begin by briefly reviewing the problems of contextual recommendation and designing low-regret cutting plane algorithms. In all of the below problems, $\ball = \{w \in \R^d \mid\, \Vert w \Vert_2 \leq 1\}$ is the ball of radius $1$ (and generally, all vectors we consider will be bounded to lie in this ball).

\paragraph{Contextual recommendation.} In \textit{contextual recommendation} there is a hidden point $w^* \in \ball$. Each round $t$ (for $T$ rounds) we are given a set of possible actions $\X_t \subseteq \ball$. If we choose action $x_t \in \X_t$ we obtain reward $\bkt{x_t}{w^*}$ (but do not learn this value). Our feedback is $x^*_t = \arg\max_{x \in \X_t}\bkt{x}{w^*}$, the identity of the best action\footnote{If this argmax is multi-valued, the adversary may arbitrarily return any element of this argmax.}. Our goal is to minimize the total expected regret $\E[\mathrm{Reg}] = \E\left[\sum_{t=1}^{T} \bkt{x^*_t - x_t}{w^*}\right]$. Note that since the feedback is deterministic, this expectation is only over the randomness of the learner's algorithm. 

It will be useful to establish some additional notation for discussing algorithms for contextual recommendation. We define the \textit{knowledge set} $K_t$ to be the set of possible values for $w^*$ given the knowledge we have obtained by round $t$. Note that the knowledge set $K_t$ is always convex, since the feedback we receive each round (that $\bkt{x^*}{w^*} \geq \bkt{x}{w^*}$ for all $x \in \X_t$) can be written as an intersection of several halfspaces (and the initial knowledge set $K_1 = \ball$ is convex). In fact, we can say more. Given a $w \in K_t$, let $$\BR_{t}(w) = \arg\max_{x \in \X_t}\bkt{x}{w}$$ be the set of optimal actions in $\X_t$ if the hidden point was $w$. We can then partition $K_t$ into several convex subregions based on the value of $\BR_{t}(w)$; specifically, let $$R_{t}(x) = \{w \in K_t | x \in \BR_{t}(w)\}$$ be the region of $K_t$ where $x$ is the optimal action to play in response. Then: 

\begin{enumerate}
    \item Each $R_{t}(x)$ is a convex subset of $K_t$.
    \item The regions $R_{t}(x)$ have disjoint interiors and partition $K_t$.
    \item $K_{t+1}$ will equal the region $R_{t}(x^*)$ (where $x^* \in \BR_{t}(w^*)$ is the optimal action returned as feedback).
\end{enumerate}

We also consider two other variants of contextual recommendation in this paper (\textit{list contextual recommendation} and \textit{local contextual recommendation}). We will formally define them as they arise (in Sections \ref{sec:polyd} and \ref{sec:loc} respectively).

\paragraph{Designing low-regret cutting-plane algorithms.} In a \textit{ low-regret cutting-plane algorithm}, we again have a hidden point $w^* \in \ball$. Each round $t$ (for $T$ rounds) we can query a separation oracle with a point $p_t$ in $\ball$. The separation oracle then provides us with an adversarially chosen direction $v_t$ (with $\norm{v_t} = 1$) that satisfies $\bkt{w^*}{v_t} \geq \bkt{p_t}{v_t}$. The regret in round $t$ is equal to $\bkt{w^* - p_t}{v_t}$, and our goal is to minimize the total expected regret $\E[\mathrm{Reg}] = \E\left[\sum_{t=1}^{T} \bkt{w^* - p_t}{v_t}\right]$. Again, since the feedback is deterministic, the expectation is only over the randomness of the learner's algorithm.

As with contextual recommendation, it will be useful to consider the knowledge set $K_t$, consisting of possibilities for $w^*$ which are still feasible by the beginning of round $t$. Again as with contextual recommendation, $K_t$ is always convex; here we intersect $K_t$ with the halfspace provided by the separation oracle every round (i.e. $K_{t+1} = K_{t} \cap \{\bkt{w - p_t}{v_t} \geq 0\}$). 

Unless otherwise specified, the separation oracle can arbitrarily choose $v_t$ as a function of the query point $p_t$. For obtaining low-regret algorithms for list contextual recommendation, it will be useful to consider a variant of this problem where the separation oracle must commit to $v_t$ (up to sign) at the beginning of round $t$. Specifically, at the beginning of round $t$ (before observing the query point $p_t$), the oracle fixes a direction $u_t$. Then, on query $p_t$, the separation oracle returns the direction $v_t = u_t$ if $\bkt{w - p_t}{u_t} \geq 0$, and the direction $v_t = -u_t$ otherwise. We call such a separation oracle a \textit{weak separation oracle}; an algorithm that only works against such separation oracles is a \textit{low-regret cutting-plane algorithm for weak separation oracles}. Note that this distinction only matters when the learner is using a randomized algorithm; if the learner is deterministic, the adversary can predict all the directions $v_t$ in advance.

\subsection{Convex geometry preliminaries and notation}

We will denote by $\Conv_d$ the collection of all convex bodies in $\R^d$. Given a convex body $K \in \Conv_d$, we will use $\Vol(K) = \int_K 1 dx$ to denote its volume (the  standard Lebesgue measure). Given two sets $K$ and $L$ in $\R^d$, their Minkowski sum is given by $K+L = \{x+y; x \in K, y \in L\}$.  Let $\B^d$ denote the unit ball in $\R^d$, let $\Sph^{d-1} = \{x \in \R^d; \Vert x \Vert_2 = 1\}$ denote the unit sphere in $\R^d$ and let $\kappa_d = \Vol(\B^d)$ be the volume of the $i$-th dimensional unit ball. When clear from context, we will omit the superscripts on $\B^d$ and $\Sph^{d-1}$. 

We will write $\cg(K) = (\int_K x dx) / (\int_K 1 dx)$ to denote the \emph{center of gravity} (alternatively, \emph{centroid}) of $K$. Given a direction $u \in \Sph^{d-1}$ and convex set $K \in \Conv_d$ we define the width of $K$ in the direction $u$ as:
$$\width(K;u) = \max_{x \in K} \bkt{u}{x} - \min_{x \in K} \bkt{u}{x}$$

\paragraph{Approximate Grunbaum and  John's Theorem} Finally, we state two fundamental theorems in convex geometry. Grunbaum's Theorem bounds the volume of the convex set in each side of a hyperplane passing through the centroid. For our purposes it will be also important to bound a cut that passes near, but not exactly at the centroid. The bound given in the following paragraph comes from a direct combination of Lemma B.4 and Lemma B.5 in~\citet{bubeck2020chasing}. 

We will use the notation $H_u(p) = \set{x \mid \bkt{x}{u} = \bkt{p}{u} }$ to denote
the halfspace passing through $p$ with normal vector $u$. Similarly, we let 
$H^{+}_{u}(p) = \set{ x \mid \bkt{x}{u} \geq \bkt{p}{u} } $. 

\begin{theorem}[Approximate Grunbaum \cite{bertsimas2004solving,bubeck2020chasing}]\label{thm:approx_grunbaum}
Let $K \in \Conv_d$, $c = \cg(K)$ and $u \in \Sph^{d-1}$. Then consider the semi-space $H_+ = \{x \in \R^d; \bkt{u}{x-c} \geq t\}$ for some $t \in \R_+$. Then:

$$\frac{\Vol(K \cap H_+)}{\Vol(K)} \geq \frac{1}{e} - \frac{2 t (d+1)}{\width(K;u)}$$
\end{theorem}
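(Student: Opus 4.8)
The plan is to reduce the statement to a one–dimensional estimate via the Brunn–Minkowski inequality, combine it with the classical Grünbaum inequality (which is exactly the $t=0$ case), and then separately control the volume lost when the cutting hyperplane is translated a distance $t$ away from the centroid. Fix the direction $u$, write $[a,b]$ for the interval $\{\bkt{u}{x} : x\in K\}$ so that $b-a=\width(K;u)=:\omega$, put $c_u=\bkt{u}{c}$, and let $A(s)$ denote the $(d-1)$–dimensional volume of the slice $K\cap\{x:\bkt{u}{x}=s\}$. Then $\Vol(K)=\int_a^b A(s)\,ds$ and $\Vol(K\cap H_+)=\int_{c_u+t}^{b}A(s)\,ds$, and the key structural fact I would invoke is that Brunn–Minkowski makes $s\mapsto A(s)^{1/(d-1)}$ concave on $[a,b]$; this is what lets each subsequent step be settled by a one–dimensional extremal (cone) comparison.

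The second step is the classical Grünbaum bound $\int_{c_u}^{b}A(s)\,ds\ge \Vol(K)/e$. I would prove this in the usual way: among all concave profiles $A^{1/(d-1)}$ on $[a,b]$ with prescribed total mass $\Vol(K)$ and prescribed centroid coordinate $c_u$, the mass lying above level $c_u$ is minimized by the profile of a cone with apex in the $+u$ direction, for which the centroid sits at height $a+\omega/(d+1)$ and the mass above it equals $(d/(d+1))^d\,\Vol(K)\ge \Vol(K)/e$, using $(1+1/d)^d\le e$.

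The third and main step is to estimate the slab $S_t=K\cap\{x:c_u\le\bkt{u}{x}\le c_u+t\}$, since $\Vol(K\cap H_+)=\int_{c_u}^{b}A(s)\,ds-\Vol(S_t)\ge \Vol(K)/e-\Vol(S_t)$ and $\Vol(S_t)\le t\cdot\max_s A(s)$. To bound the largest slice, write $A=g^{d-1}$ with $g\ge 0$ concave on $[a,b]$ and $M=\max_s g(s)$; by concavity $g$ dominates the "tent" through its endpoint values and its peak, and a short computation shows every such tent satisfies $\int_a^b g^{d-1}\ge M^{d-1}\omega/d$ (independently of the peak location), whence $\max_s A(s)=M^{d-1}\le d\,\Vol(K)/\omega$. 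Combining the three steps yields $\Vol(K\cap H_+)/\Vol(K)\ge 1/e-td/\width(K;u)$, which is already at least as strong as the claimed $1/e-2t(d+1)/\width(K;u)$; the looser constants in the statement merely reflect that it is quoted as a black-box combination of two lemmas of \citet{bubeck2020chasing}, each of which discards small factors. (When $c_u+t>b$ the left-hand side is $0$ and the inequality is vacuous, once one notes $b-c_u\ge\omega/(d+1)$ — again by the cone extremal argument.)

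The step I expect to require the most care is the maximal-slice bound: one must argue correctly that among concave $g\ge 0$ on an interval of length $\omega$ with fixed maximum $M$, the quantity $\int g^{d-1}$ is minimized by a one-sided linear (cone) profile, and one must remember that $A$ need not vanish at the endpoints $a,b$ — a cylinder being the obvious example — so the tent comparison has to be performed against $g$'s actual endpoint values rather than against $0$; this costs nothing but is easy to overlook. Everything else is the textbook Grünbaum computation together with routine bookkeeping of constants.
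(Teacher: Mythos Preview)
The paper does not prove this statement at all: it is quoted as a preliminary (Theorem~\ref{thm:approx_grunbaum}) and attributed to \cite{bertsimas2004solving,bubeck2020chasing}, with the remark that it follows by combining Lemmas~B.4 and~B.5 of \cite{bubeck2020chasing}. So there is no ``paper's own proof'' to compare against; your proposal is a self-contained argument for a cited black box.

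That said, your argument is correct and in fact reproduces the standard proof structure behind the cited lemmas: reduce to one dimension via Brunn--Minkowski, invoke classical Gr\"unbaum for the $t=0$ case, and control the slab of thickness $t$ by the maximal-slice bound $\max_s A(s)\le d\,\Vol(K)/\width(K;u)$. Your cone/tent argument for the maximal slice is exactly the mechanism the paper itself uses later (in the proof of Lemma~\ref{lem:conelem}) to bound a cross-section by $d/\width$ times the volume, so the two are in close harmony. You even end up with the sharper loss term $td/\width(K;u)$ rather than $2t(d+1)/\width(K;u)$; as you note, the looser constants in the statement are an artifact of citing the combination of two lemmas each of which discards small factors. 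The only places to be careful are the ones you already flag: the tent comparison must be done against the actual (possibly positive) endpoint values of $g=A^{1/(d-1)}$, and the vacuous case $c_u+t>b$ is dispatched by the centroid-in-interior bound $b-c_u\ge \width(K;u)/(d+1)$. No gaps.
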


John's theorem shows that for any convex set $K \in \Conv_d$, we can find an ellipsoid $E$ contained in $K$ such that $K$ is contained in (some translate of) a dilation of $E$ by a factor of $d$.

\begin{theorem}[John's Theorem]\label{thm:john}
Given $K \in \Conv_d$, there is a point $q \in K$ and an invertible linear transformation $A : \R^d \rightarrow \R^d$ such that $$q + \B \subseteq A(K) \subseteq q + d \B.$$
\end{theorem}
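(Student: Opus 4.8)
I would use the classical \emph{John ellipsoid} construction: let $A$ be the linear map that sends the maximum-volume ellipsoid inscribed in $K$ to a round ball. The real content of the statement is a rigidity property of that extremal ellipsoid — once normalized to the unit ball $\B$, the body $K$ cannot stick out past $d\B$ — and this is where the factor $d$ comes from.

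\textbf{Existence and normalization.} Parametrize an ellipsoid as $E(c,M)=c+M\B$ with $M$ symmetric positive definite, so $\Vol(E(c,M))=\kappa_d\det M$. Since $K\in\Conv_d$ contains some ball $x_0+r\B$, the supremum of $\det M$ over $\{(c,M): E(c,M)\subseteq K\}$ is at least $r^d>0$; restricting to the sublevel set $\det M\ge r^d$, the center $c$ lies in $K$ (hence is bounded), the eigenvalues of $M$ are at most $\mathrm{diam}(K)$, and together with $\det M\ge r^d$ they are also bounded below, so the feasible set is compact and a maximizer $E^\star=E(c^\star,M^\star)$ exists (and is nondegenerate, as its volume is positive). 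Put $A=(M^\star)^{-1}$ and $q=Ac^\star$; then $A(E^\star)=q+\B\subseteq A(K)$, and $q\in A(K)$. After translating by $-q$, it remains only to prove the rescaled statement: \emph{if $\B$ is the maximum-volume ellipsoid contained in a convex body $C'$, then $C'\subseteq d\B$.} (The hypothesis transfers because affine maps multiply every ellipsoid volume by the same constant and preserve containment, so $\B$ really is the John ellipsoid of $A(K)-q$.)

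\textbf{The rigidity step.} Suppose toward a contradiction that some $z\in C'$ has $\|z\|=s>d$; after a rotation, $z=se_1$. Because $\B\cup\{z\}\subseteq C'$ and $C'$ is convex, the solid cone $C=\mathrm{conv}(\B\cup\{se_1\})$ lies in $C'$; by rotational symmetry about the $e_1$-axis, for $1/s\le x_1\le s$ its lateral boundary is $\{x:\|(x_2,\dots,x_d)\|=(s-x_1)/\sqrt{s^2-1}\}$. Inside $C$ I would place an ellipsoid of revolution
\[
E_{\alpha,\beta}=\Bigl\{x:\frac{(x_1-(\alpha-1))^2}{\alpha^2}+\frac{x_2^2+\cdots+x_d^2}{\beta^2}\le 1\Bigr\},\qquad \alpha>1,
\]
which touches $\B$ at $-e_1$ by construction; choosing $\beta$ so that the profile $\|(x_2,\dots,x_d)\|=\beta\sqrt{1-(x_1-(\alpha-1))^2/\alpha^2}$ is tangent to the lateral line above keeps $E_{\alpha,\beta}\subseteq C\subseteq C'$. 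The tangency condition expresses $\beta$ explicitly in terms of $\alpha$ and $s$, and maximizing the volume ratio $\Vol(E_{\alpha,\beta})/\Vol(\B)=\alpha\beta^{d-1}$ over admissible $\alpha$, one checks that the optimum strictly exceeds $1$ exactly when $s>d$ — contradicting the maximality of $\B$. Hence no such $z$ exists, $C'\subseteq d\B$, and therefore $q+\B\subseteq A(K)\subseteq q+d\B$.

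\textbf{Where the work is.} The only nontrivial part is the ellipsoid-in-cone computation in the rigidity step: writing the tangency constraint between the ellipsoid's profile and the cone's lateral line correctly, and then verifying that the volume-optimal parameters cross $1$ precisely at $s=d$. This is also where the tightness of the constant $d$ shows up (it is attained by a simplex), and where the origin-symmetric version of John's theorem would instead yield $\sqrt d$ by inscribing an ellipsoid elongated along both $\pm z$ rather than one pinned at $-e_1$. Everything else — compactness for the existence of $E^\star$ and invariance of the setup under the affine change of coordinates — is routine.
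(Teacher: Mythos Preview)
The paper does not prove this statement at all: John's Theorem is quoted in the preliminaries as a classical fact from convex geometry, with no proof or even a sketch supplied. Your proposal, by contrast, outlines the standard proof via the maximum-volume inscribed ellipsoid and the cone rigidity argument, which is correct in outline (and indeed is the usual textbook route). So there is nothing to compare against on the paper's side; your approach is simply the classical one the paper implicitly relies on.

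One small remark on your sketch: the rigidity step as you describe it is exactly the right idea, but the sentence ``one checks that the optimum strictly exceeds $1$ exactly when $s>d$'' is hiding the entire nontrivial calculation. If you were to write this out in full, you would need to actually carry out the tangency computation and the one-variable optimization over $\alpha$, and verify the threshold is $s=d$. This is doable but is a page or two of algebra; as a proof \emph{proposal} it is fine to flag it as you did, but be aware that this is the only place where any real work remains.
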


We call the ellipsoid $E = A^{-1}(q+\B)$ in Theorem \ref{thm:john} the \textit{John ellipsoid} of $K$. 



\subsection{Contextual search}\label{sect:csearch}

In this section, we briefly sketch the algorithm and analysis of \cite{liu2021optimal} for the standard contextual search problem. We will never use this algorithm directly, but many pieces of the analysis will prove useful in our constructions of low-regret cutting-plane algorithms.

Recall that in contextual search, each round the learner is given a direction $v_t$. The learner is trying to learn the location of a hidden point $w^*$, and at time $t$ has narrowed down the possibilities of $w^*$ to a knowledge set $K_t$. The algorithm of \cite{liu2021optimal} runs the following steps:

\begin{enumerate}
    \item Compute the width $w = \width(K_t; v_t)$ of $K_t$ in the direction $v_t$. Let $r = 2^{\lceil \lg (w/10d) \rceil}$ (rounding $w/10d$ to a nearby power of two).
    \item Consider the set $\tilde{K} = K_t + r\B$. Choose $y_t$ so that the hyperplane $H = \{w \mid \bkt{v_t}{w} = y_t\}$ divides the set $\tilde{K}$ into two pieces of equal volume.
\end{enumerate}

We can understand this algorithm as follows. Classic cutting-plane methods try to decrease $\Vol(K_t)$ by a constant factor every round (arguing that this decrease can only happen so often before one of our hyperplanes passes within some small distance to our feasible region). The above algorithm can be thought of as a multi-scale variant of this approach: they show that if we incur loss $w \approx dr$ in a round (since loss in a round is at most the width), the potential function $\Vol(K_t + r\B)$ must decrease by a constant factor. Since $\Vol(K_t + r\B) \geq \Vol(r\B) = r^{d}\kappa_d$, we can incur a loss of this size at most $O(d \log (2/r))$ times. Summing over all possible discretized values of $r$ (i.e. powers of 2 less than 1), we arrive at an $O(d\log d)$ regret bound.

There is one important subtlety in the above argument: if we let $H^{+} = \{ w \mid \bkt{v_t}{w} \geq y_t\}$ be the halfspace defined by $H$, the two sets $(K_t \cap H^{+}) + r\B$ and $(K_t + r\B) \cap H^{+}$ are \textit{not} equal. The volume of the first set represents the new value of our potential (i.e. $\Vol(K_{t+1} + r\B)$), but it is the second set that has volume equal to half our current potential (i.e. $\frac{1}{2}\Vol(K_{t} + r\B)$).

Luckily, our choice of $r$ allows us to relate these two quantities in a way so that our original argument works. Let $H$ divide $K$ into $K^{+}$ and $K^{-}$. Note that $\Vol(K^{+} + r\B) + \Vol(K^{-} + r\B) = \Vol(K + r\B) + \Vol((K \cap H) + r\B)$ (in particular, $K + r\B$ and $(K \cap H) + r\B$ are the union and intersection respectively of $K^{+} + r\B$ and $K^{-} + r\B$). Since $\Vol(K^{+} + r\B) = \Vol(K^{-} + r\B)$, to bound $\Vol(K^{+} + r\B)/\Vol(K + r\B)$ it suffices to bound $\Vol((K \cap H) + r\B)$. We do so in the following lemma (which will also prove useful to us in later analysis).

\begin{lemma}\label{lem:conelem}
Given $K \in \Conv_d$ and $u \in \Sph^{d-1}$, let $H$ be a hyperplane of the form $\{w \mid \bkt{w}{u} = b\}$ (for some $b \in \R$). Then:
$$\Vol((K \cap H) + r\B) \leq \left(\frac{2rd}{\width(K; u)}\right) \cdot \Vol(K + r\B)$$
\end{lemma}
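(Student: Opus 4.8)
The goal is to bound $\Vol((K\cap H)+r\B)$, where $K\cap H$ is a $(d-1)$-dimensional convex slice. The natural approach is to realize $(K\cap H)+r\B$ inside a cylinder-like region over $K\cap H$ and compare its volume against $\Vol(K+r\B)$. Specifically, I would set $S = K\cap H$, write $b_0 = \min_{x\in K}\bkt{x}{u}$ and $b_1=\max_{x\in K}\bkt{x}{u}$, so $w := \width(K;u) = b_1-b_0$ and $b\in[b_0,b_1]$. Every point of $S+r\B$ lies at distance at most $r$ from the hyperplane $H$ in the $u$-direction (since $S\subseteq H$), so $S+r\B$ is contained in the slab $\{x : \abs{\bkt{x}{u}-b}\le r\}$. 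Projecting orthogonally onto $H$, the image of $S+r\B$ is contained in $S+r\B_{d-1}$ (the $(d-1)$-dimensional $r$-neighborhood of $S$ within $H$), so $\Vol_d(S+r\B)\le 2r\cdot\Vol_{d-1}(S+r\B_{d-1})$.

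Next I would lower-bound $\Vol(K+r\B)$ by a similar $(d-1)$-dimensional quantity. The idea is that $K+r\B$ contains a ``prism'' over a slightly shrunk slice: for any hyperplane parallel to $H$ that cuts $K$, the corresponding cross-section of $K$, together with its $r$-neighborhood inside that hyperplane, sits inside $K+r\B$; and one can slide such a cross-section across the whole width $w$ of $K$. More carefully, I expect the clean statement to be that $K+r\B$ contains a set that projects onto $H$ with image at least $S+r\B_{d-1}$ (or a comparable set) and has extent at least $w$ in the $u$-direction — this is where one uses convexity of $K$ to guarantee that the slice $S$ is ``visible'' throughout, or more robustly, one shrinks: the segment from a point of the $b_0$-face to a point of the $b_1$-face through a point of $S$ stays in $K$. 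Combining, $\Vol(K+r\B)\gtrsim w\cdot\Vol_{d-1}(S+r\B_{d-1})$, and dividing the two bounds yields the factor $2r/w$ up to the dimensional constant $d$.

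**The main obstacle.** The delicate point is handling the constant $d$ and the case where $b$ is near an endpoint of $[b_0,b_1]$, so that $S=K\cap H$ could be a tiny slice far from maximal cross-section — then a naive prism over $S$ need not fit inside $K$. The resolution I anticipate: rather than comparing $S+r\B_{d-1}$ to itself, compare $\Vol_{d-1}(S+r\B_{d-1})$ to $\Vol_{d-1}(S'+r\B_{d-1})$ for the \emph{largest} parallel cross-section $S'$, using that $S$ and $S'$ are both cross-sections of the convex body $K$ and hence related by a dilation bounded in terms of the widths — or, more likely matching the stated constant, use that $K+r\B$ contains the convex hull of $S$ (placed at level $b$) with the two extreme points, giving a double-cone whose $(d-1)$-volume at each level interpolates and integrates to something like $\frac{w}{d}\Vol_{d-1}(S)$; then inflating by $r\B$ on both sides and being slightly careful gives $\Vol(K+r\B)\ge\frac{w}{2d}\Vol_{d-1}(S+r\B_{d-1})$, which combined with the slab bound gives exactly the claimed inequality. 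I would expect the cleanest writeup to route through Fubini in the $u$-direction, bounding the $d$-dimensional volumes by integrals of $(d-1)$-dimensional slice volumes and using convexity to compare slices, with the factor of $d$ emerging from the $\int_0^1 t^{d-1}\,dt = 1/d$ type estimate for the cone.
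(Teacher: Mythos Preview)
Your approach is essentially the same as the paper's: a slab bound giving $\Vol((K\cap H)+r\B)\le 2r$ times a $(d-1)$-dimensional cross-sectional volume, followed by a double-cone argument (apices at the extreme points of $K+r\B$ in direction $u$, base in the hyperplane $H$) to lower-bound $\Vol(K+r\B)$ by $\tfrac{w}{d}$ times that same cross-sectional volume. The only cosmetic difference is the choice of pivot: the paper uses $\overline{K}=(K+r\B)\cap H$ directly, whereas you use the smaller set $S+r\B_{d-1}=(K\cap H)+r\B_{d-1}\subseteq \overline{K}$; your detour of first coning over $S$ and then ``inflating by $r\B$'' is unnecessary, since you can cone over $S+r\B_{d-1}$ directly inside $K+r\B$ and get the bound without the extra factor of $2$ you anticipated.
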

\begin{proof}
Let $\overline{V} = \Vol_{d-1}((K + r\B) \cap H)$ be the volume of the $(d-1)$-dimensional cross-section of $K + r\B$ carved out by $H$. Note first that we can write any point in $(K \cap H) + r\B$ in the form $w + \lambda u$, where $w \in (K + r\B) \cap H$ and $\lambda \in [-r, r]$. It follows that 

\begin{equation}\label{eq:conelem1}
    \Vol((K \cap H) + r\B) \leq 2r \overline{V}.
\end{equation}

We will now bound $\overline{V}$. Let $\overline{K} = (K + r\B) \cap H$. Let $p^{+}$ be the point in $K + r\B$ maximizing $\bkt{u}{p}$, and let $p^{-}$ be the point in $K + r\B$ minimizing $\bkt{u}{p}$ (so $p^{-}$ and $p^{+}$ certify the width). Consider the cones $C^{-}$ and $C^{+}$ formed by taking the convex hull $\Conv(p^{-}, \overline{K})$ and $\Conv(p^{+}, \overline{K})$ respectively. $C^{-}$ and $C^{+}$ are disjoint and contained within $K + r\B$, so
    
$$\Vol(C^{-}) + \Vol(C^{+}) \leq \Vol(K + r\B).$$

\noindent
But now note that by the formula for the volume of a cone, 

$$\Vol(C^{-}) + \Vol(C^{+}) = \frac{1}{d}\cdot\width(K+r\B; u)\cdot\Vol_{d-1}(\overline{K}) \geq \frac{\width(K; u)}{d}\cdot \overline{V}.$$

\noindent
It follows that 

\begin{equation}\label{eq:conelem2}
    \overline{V} \leq \frac{d}{\width(K; u)}\Vol(K + r\B).
\end{equation}

\noindent
Substituting this into \eqref{eq:conelem1}, we arrive at the theorem statement.
\end{proof}

This lemma allows us to conclude our analysis of the contextual search algorithm. In particular, since we have chosen $r \approx \width(K, v_t)/10d$, by applying this lemma we can see that in our analysis of contextual search, $\Vol((K \cap H) + r\B) \leq 0.2 \Vol(K + r\B)$, from which it follows that $\Vol(K^{+} + r\B)/\Vol(K + r\B) \leq 0.6$. 


\section{From Cutting-Plane Algorithms to Contextual Recommendation}

We begin by proving a reduction from designing low-regret cutting plane algorithms to contextual recommendation. Specifically, we will show that given a regret $\rho$ cutting-plane algorithm, we can use it to construct an $O(\rho)$-regret algorithm for contextual recommendation. 

Note that while these two problems are similar in many ways (e.g. they both involve searching for an unknown point $w^*$), they are not completely identical. Among other things, the formulation of regret although similar is qualitatively different between the two problems (i.e. between expressions \eqref{eq:contextual-selection-regret} and \eqref{eq:cutting-plane-reg}). In particular, in contextual recommendation, the regret each round is $\bkt{x^{*}_t - x_t}{ w^* }$, whereas for cutting-plane algorithms, the regret is given by $\bkt{w^* - p_t}{v_t}$. Nonetheless, we will be able to relate these two notions of regret by considering a separation oracle that always returns a halfspace in the direction of $x^{*}_t - x_t$. We present this reduction below.

\begin{theorem}\label{thm:reduction}
Given a low-regret cutting-plane algorithm $\mathcal{A}$ with regret $\rho$, we can construct an $O(\rho)$-regret algorithm for contextual recommendation. 
\end{theorem}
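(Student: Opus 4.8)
The plan is to set up a black-box reduction that runs the cutting-plane algorithm $\mathcal{A}$ "inside" the contextual recommendation learner, feeding $\mathcal{A}$ carefully chosen query points and translating the feedback from the recommendation problem into separation-oracle responses. Concretely, at round $t$ the learner asks $\mathcal{A}$ for its query point $p_t \in \ball$. We then need to pick an action $x_t \in \X_t$ to recommend; the natural choice is $x_t = \BR_t(p_t)$, i.e. the best response to $p_t$ (the action $p_t$ would want to play). We observe the feedback $x^*_t = \arg\max_{x \in \X_t}\bkt{x}{w^*}$. Now set $v_t = (x^*_t - x_t)/\norm{x^*_t - x_t}$ (if $x^*_t = x_t$ the round is lossless and we can feed $\mathcal{A}$ an arbitrary direction, or skip the update). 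The key observation is that this $v_t$ is a valid strong-separation-oracle response: since $x_t = \BR_t(p_t)$ we have $\bkt{x_t}{p_t} \geq \bkt{x^*_t}{p_t}$, so $\bkt{x^*_t - x_t}{p_t} \leq 0$, i.e. $\bkt{v_t}{p_t} \leq 0$; and since $x^*_t = \BR_t(w^*)$ we have $\bkt{x^*_t}{w^*} \geq \bkt{x_t}{w^*}$, i.e. $\bkt{v_t}{w^*} \geq 0 \geq \bkt{v_t}{p_t}$. So $w^*$ lies on the correct side, and we can legitimately hand $(p_t, v_t)$ to $\mathcal{A}$ as its oracle interaction. (One should also check that $w^*$ remains in $\mathcal{A}$'s knowledge set throughout, which this calculation gives.)

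The second step is to compare the two regret quantities round by round. The contextual recommendation regret at round $t$ is $\bkt{x^*_t - x_t}{w^*} = \norm{x^*_t - x_t} \cdot \bkt{v_t}{w^*}$, while the cutting-plane regret that $\mathcal{A}$ is charged at round $t$ is $\bkt{w^* - p_t}{v_t} = \bkt{w^*}{v_t} - \bkt{p_t}{v_t}$. Since $\bkt{p_t}{v_t} \leq 0$ from the computation above, we get $\bkt{w^* - p_t}{v_t} \geq \bkt{w^*}{v_t} \geq 0$. Combined with $\norm{x^*_t - x_t} \leq 2$ (both points lie in $\ball$), this gives
\[
\bkt{x^*_t - x_t}{w^*} = \norm{x^*_t - x_t}\cdot \bkt{v_t}{w^*} \leq 2\,\bkt{v_t}{w^*} \leq 2\,\bkt{w^* - p_t}{v_t}.
\]
Summing over all $T$ rounds, the contextual recommendation regret is at most $2$ times the cutting-plane regret of $\mathcal{A}$, which is at most $2\rho$ in expectation; hence we obtain an $O(\rho)$-regret algorithm. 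Efficiency is immediate provided we can compute $\BR_t(p_t)$ (a linear optimization over $\X_t$) and run $\mathcal{A}$ efficiently.

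I expect the main subtlety — the part worth being careful about rather than technically hard — to be the bookkeeping around rounds where $x_t = x^*_t$ (zero loss, and $v_t$ undefined) and, relatedly, making sure the knowledge set maintained by $\mathcal{A}$ always contains $w^*$ so that $\mathcal{A}$'s regret guarantee actually applies. On lossless rounds we can simply not invoke the oracle (or feed an arbitrary valid halfspace, e.g. one already known to contain $K_t$); either way $\mathcal{A}$'s total regret only goes down and $w^*$ stays feasible. A second minor point is whether $\mathcal{A}$ expects exactly $T$ oracle calls; if it runs for a fixed horizon we can pad with dummy lossless rounds. None of this affects the asymptotics, so the reduction goes through with the constant $2$ (or a slightly larger absolute constant after handling the edge cases), giving the claimed $O(\rho)$ bound.
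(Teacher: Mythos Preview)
Your proposal is correct and follows essentially the same approach as the paper: play $x_t \in \BR_t(p_t)$, feed $\mathcal{A}$ the normalized direction $v_t = (x^*_t - x_t)/\norm{x^*_t - x_t}$, verify validity via $\bkt{p_t}{v_t} \leq 0 \leq \bkt{w^*}{v_t}$, and bound the per-round recommendation regret by $2\bkt{w^* - p_t}{v_t}$ using $\norm{x^*_t - x_t} \leq 2$. Your handling of the $x_t = x^*_t$ case (skip the oracle call) matches the paper's ``reset $\mathcal{A}$'s state'' treatment, and the final constant $2$ is the same.
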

\begin{proof}
We will simultaneously run an instance of $\mathcal{A}$ with the same hidden vector $w^*$. Each round we will ask $\mathcal{A}$ for its query $p_t$ to the separation oracle. We will then compute a $x_{t} \in \BR_{t}(p_t)$ (recall that $\BR_{t}(w)$ is the optimal action to play if $w$ is the true hidden vector) and submit $x_{t}$ as our action for this round of contextual recommendation. We then receive feedback $x_t^* \in \BR_{t}(w^*)$. Consider the following two cases:

\paragraph{Case 1:} If $x_t^* = x_t$, then our contextual recommendation algorithm incurs zero regret since we successfully chose the optimal point. In this case we ignore this round for $\mathcal{A}$ (i.e. we reset its state to its state at the beginning of round $t$).

\paragraph{Case 2:} If $x_t^* \neq x_t$, let $v_t = (x_t^* - x_t)/\norm{x_t^* - x_t}$. We will return $v_t$ to $\mathcal{A}$ as the separation oracle's answer to query $p_t$. Note that this is a valid answer, since

\begin{equation}\label{eq:reduction_sep_oracle}
\bkt{w^* - p_t}{v_t} = \frac{1}{\norm{x_t^* - x_t}}\left(\bkt{w^*}{x_t^* - x_t} + \bkt{p_t}{x_t - x_t^*}\right) \geq \frac{1}{\norm{x_t^* - x_t}}\bkt{w^*}{x_t^* - x_t} .
\end{equation}


Here the final inequality holds since (by the definition of $\BR_t(p_t)$) $\bkt{p_t}{x_t} \geq \bkt{p_t}{x}$  for any $x \in \X_t$. The RHS of \eqref{eq:reduction_sep_oracle} is in turn larger than zero, since $\bkt{w^*}{x_t^*} \geq \bkt{w^*}{x}$ for any $x \in \X_t$ (and thus this is a valid answer to the separation oracle). Moreover, note that the regret we incur under contextual recommendation is exactly $\bkt{w^*}{ x_t^* - x_t}$, so by rearranging equation \eqref{eq:reduction_sep_oracle}, we have that:

$$\bkt{w^*}{x_t^* - x_t} \leq \norm{x_t^* - x_t}\bkt{w^* - p_t}{v_t} \leq 2\bkt{w^* - p_t}{v_t}.$$

It follows that the total regret of our algorithm for contextual recommendation is at most twice that of $\A$. Our regret is thus bounded above by $2\rho$, as desired.



\end{proof}

Note that the reduction in Theorem \ref{thm:reduction} is efficient as long as we have an efficient method for optimizing a linear function over $\X_t$ (i.e. for computing $\BR_{t}(w)$). In particular, this means that this reduction can be practical even in settings where $\X_t$ may be combinatorially large (e.g. the set of $s$-$t$ paths in some graph). 

Note also that this reduction \textit{does not} work if $\mathcal{A}$ is only low-regret against weak separation oracles. This is since the direction $v_t$ we choose does depend non-trivially on the point $p_t$ (in particular, we choose $x_t \in \BR_t(p_t)$). Later in~\Cref{sec:contextual-recomendation}, we will see how to use ideas from designing cutting-plane methods for weak separation oracles to construct low-regret algorithms for \textit{list} contextual recommendation -- however we do not have a black-box reduction in that case, and our construction will be more involved.

\section{Designing Low-Regret Cutting-Plane Algorithms}

In this section we will describe how to construct low-regret cutting-plane algorithms for strong separation oracles.

\subsection{An \texorpdfstring{ $\exp(O(d\log d))$}{exp(O(d log d))}-regret cutting-plane algorithm}

We begin with a quick proof that always querying the center of the John ellipsoid of $K_t$ leads to a $\exp(O(d\log d))$-regret cutting-plane algorithm. Interestingly, although this corresponds to the classical ellipsoid algorithm, our analysis will instead proceed along the lines of the analysis of the contextual search algorithm summarized in \Cref{sect:csearch}. 

We will need the following lemma.

\begin{lemma}\label{lem:expdratio}
Let $K \in \Conv_d$ be an arbitrary convex set and let $r \geq 0$. Let $E$ be the John ellipsoid of $K$, and let $H$ be a hyperplane that passes through the center of $E$, dividing $K$ into two regions $K^{+}$ and $K^{-}$. Then

$$\Vol(K^{+} + r\B) \leq \left(1 - \frac{1}{10d^d}\right)\left(\Vol(K^{+} + r\B) + \Vol(K^{-} + r\B)\right)$$
\end{lemma}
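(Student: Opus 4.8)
The plan is to recast the statement as a lower bound on the \emph{other} piece. Set $T = \Vol(K^{+} + r\B) + \Vol(K^{-} + r\B)$; since the labels $K^+,K^-$ are interchangeable it suffices to prove $\Vol(K^{-} + r\B) \geq \tfrac{1}{10 d^d}\,T$, because then $\Vol(K^{+} + r\B) = T - \Vol(K^{-} + r\B) \leq (1 - \tfrac{1}{10 d^d})T$. As $K^{\pm} + r\B \subseteq K + r\B$ we have $T \leq 2\Vol(K + r\B)$, so I would instead aim for the cleaner target $\Vol(K^{-} + r\B) \geq \tfrac{1}{4 d^d}\Vol(K + r\B)$.

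Next I would invoke John's theorem (\Cref{thm:john}). Let $c$ be the center of the John ellipsoid $E$ of $K$; translating by $-c$ (harmless, since Minkowski sum with $r\B$ commutes with translations and preserves volume) I may assume $c=0$, so $E$ is centrally symmetric about the origin, $H$ passes through the origin, and $E \subseteq K \subseteq dE$. This sandwich gives two things. First, with $H^{-}$ the closed halfspace on the $K^{-}$ side and $E^{-} := E \cap H^{-}$, we get $E^{-} \subseteq K \cap H^{-} = K^{-}$, hence $\Vol(K^{-} + r\B) \geq \Vol(E^{-} + r\B)$. Second, $dE + r\B \subseteq dE + dr\B = d(E + r\B)$ (using $d \geq 1$), so $\Vol(K + r\B) \leq \Vol(dE + r\B) \leq d^{d}\Vol(E + r\B)$. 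Combining, it is enough to show $\Vol(E^{-} + r\B) \geq \tfrac14\Vol(E + r\B)$.

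The key step is to observe that, because $E$ is centrally symmetric and $H$ passes through its center, the reflection $x \mapsto -x$ fixes $E$ and $r\B$ and swaps the two halfspaces, so it maps $E^{+} + r\B$ onto $E^{-} + r\B$; in particular $\Vol(E^{+} + r\B) = \Vol(E^{-} + r\B)$. On the other hand, the inclusion-exclusion identity from \Cref{sect:csearch} (applied to the convex body $E$ cut by $H$) gives $\Vol(E^{+} + r\B) + \Vol(E^{-} + r\B) = \Vol(E + r\B) + \Vol((E\cap H) + r\B) \geq \Vol(E + r\B)$. Since the two left-hand terms are equal, $\Vol(E^{-} + r\B) \geq \tfrac12\Vol(E + r\B)$, even better than needed. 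Chaining everything: $\Vol(K^{-} + r\B) \geq \Vol(E^{-} + r\B) \geq \tfrac12\Vol(E + r\B) \geq \tfrac{1}{2d^{d}}\Vol(K + r\B) \geq \tfrac{1}{4d^{d}}T \geq \tfrac{1}{10 d^{d}}T$, which finishes the proof.

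The main obstacle to watch for is the temptation to apply a linear change of coordinates turning $K$ into a ball: this does not work because a linear map $A$ does not commute with the Minkowski sum by $\B$ (so $\Vol(A(K) + r\B)$ is unrelated to $\Vol(A(K + r\B))$), and the clean ``half-ball'' estimate is unavailable for a genuinely elongated ellipsoid. The argument above sidesteps this entirely: it never transforms the $r\B$ term, and extracts everything it needs from the crude containment $E \subseteq K \subseteq dE$ plus the fact that a centered ellipsoid is symmetric about its center --- which is precisely what forces the two fattened half-ellipsoid volumes to agree and accounts for the (necessary) $d^{d}$ factor.
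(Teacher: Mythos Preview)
Your proof is correct and follows essentially the same approach as the paper: both use the John sandwich $E \subseteq K \subseteq dE$ to obtain $\Vol(K^{-}+r\B) \geq \Vol(E^{-}+r\B) \geq \tfrac12\Vol(E+r\B) \geq \tfrac{1}{2d^{d}}\Vol(K+r\B)$ and then finish with elementary arithmetic. Your justification of $\Vol(E^{-}+r\B)\geq\tfrac12\Vol(E+r\B)$ via the central reflection is more explicit than the paper's (which simply asserts the factor $0.5$), and your final step via $T\leq 2\Vol(K+r\B)$ is a minor reorganization of the paper's ratio argument.
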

\begin{proof}
Let $H$ divide $E$ into the two regions $E^{+}$ and $E^{-}$ analogously to how it divides $K$ into $K^{+}$ and $K^{-}$. Note that since $E \subseteq K \subseteq dE$ (translating $K$ so that $E$ is centered at the origin), we can write:

\begin{equation}
    \frac{\Vol(K^{-} + r\B)}{\Vol(K + r\B)} \geq \frac{\Vol(E^{-} + r\B)}{\Vol(dE + r\B)} \geq \frac{0.5 \cdot \Vol(E + r\B)}{\Vol(dE + r\B)} \geq \frac{1}{2d^{d}}\frac{\Vol(E + r\B)}{\Vol(E + (r/d)\B)} \geq \frac{1}{2d^{d}}.
\end{equation}

On the other hand, by monotonicity we also have that

$$\frac{\Vol(K^{+} + r\B)}{\Vol(K + r\B)} \leq 1.$$

It follows that

$$\Vol(K^{+} + r\B)/\Vol(K^{-} + r\B) \leq 2d^{d}.$$

The conclusion then follows since 

$$2d^{d} \leq \left(1 - \frac{1}{10d^{d}}\right)(2d^{d} + 1).$$
\end{proof}

We can now modify the analysis of contextual search to make use of Lemma \ref{lem:expdratio}. In particular, we will show that for each round $t$, there's some $r$ (roughly proportional to the current width) where $\Vol(K_t + rB)$ decreases by a multiplicative factor of $(1 - d^{-O(d)})$. 

\begin{theorem}\label{thm:expdlogd}
The cutting-plane algorithm which always queries the center of the John ellipsoid of $K_t$ incurs $\exp(O(d \log d))$ regret.
\end{theorem}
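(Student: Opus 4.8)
The plan is to mimic the multi-scale potential argument sketched for contextual search in \Cref{sect:csearch}, but replacing the ``divide $K_t + r\B$ exactly in half'' step with ``cut through the center of the John ellipsoid'' and invoking \Cref{lem:expdratio} in place of the exact-bisection bound. Concretely, fix a round $t$ where the separation oracle returns direction $v_t$. Let $w = \width(K_t; v_t)$ and set $r = r(t)$ to be (a power of two near) $w / (10 d \cdot d^d)$ or some similar quantity chosen so that the ``cone error'' term from \Cref{lem:conelem} is dominated by the shrinkage guaranteed by \Cref{lem:expdratio}. The key point is that our query point $p_t$ is the center of the John ellipsoid of $K_t$, so the hyperplane $H_t = \{w \mid \bkt{w - p_t}{v_t} = 0\}$ passes through the center of the John ellipsoid; hence \Cref{lem:expdratio} applies and tells us that $\Vol(K^+_t + r\B) \le (1 - \tfrac{1}{10 d^d})(\Vol(K^+_t + r\B) + \Vol(K^-_t + r\B))$, where $K^+_t = K_t \cap H^+$ is the new knowledge set $K_{t+1}$.

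Next I would convert this into a genuine drop in the potential $\Vol(K_t + r\B)$. As in the contextual-search sketch, the subtlety is that $(K_t^+ + r\B)$ and $(K_t + r\B) \cap H^+$ differ; using the union/intersection identity $\Vol(K^+ + r\B) + \Vol(K^- + r\B) = \Vol(K + r\B) + \Vol((K \cap H) + r\B)$ together with \Cref{lem:conelem} (which bounds $\Vol((K\cap H)+r\B) \le \tfrac{2rd}{\width(K;v_t)}\Vol(K + r\B)$), I can show that for $r$ chosen small enough relative to $w/d$, $\Vol(K^+_t + r\B) + \Vol(K^-_t + r\B) \le (1 + \epsilon)\Vol(K_t + r\B)$ for a tiny $\epsilon$, and therefore $\Vol(K_{t+1} + r\B) \le (1 - \tfrac{1}{20 d^d})\Vol(K_t + r\B)$, say. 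Since the regret incurred in round $t$ is at most $\width(K_t; v_t) = w \le r \cdot O(d \cdot d^d) = r \cdot \exp(O(d\log d))$, a round with ``scale'' $r$ in which we pay regret $\Theta(r \cdot \exp(O(d\log d)))$ forces a constant-factor-in-$d^{-d}$ multiplicative drop in $\Vol(K_t + r\B)$.

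Then I would do the standard scale-bucketing accounting. Group rounds by the dyadic value of $r = 2^{-j}$ (only $j$ from $0$ up to roughly $\log T + O(d \log d)$ matter, since regret below $1/T$ per round summed over $T$ rounds is negligible — or one can simply observe $w \ge$ some fixed lower bound is impossible to beat below scale related to $1/T$, and in fact we can afford to let the smallest scale be a crude $\exp(-\Theta(d\log d))$ bound and still win). Within bucket $j$, each round multiplies $\Vol(K_t + 2^{-j}\B)$ by $(1 - \tfrac{1}{20 d^d})$, and this potential starts at most $\Vol(\B + \B) = 2^d \kappa_d$ and stays at least $\Vol(2^{-j}\B) = 2^{-jd}\kappa_d$; hence bucket $j$ contains at most $O(d^d \cdot (jd + d)) = O(d^{d+1}(j+1))$ rounds. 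Each such round contributes regret $O(2^{-j} d^{d+1})$ (using $w \le 2^{-j}\exp(O(d\log d))$), so the total regret from bucket $j$ is $O(2^{-j} d^{d+1} \cdot d^{d+1} j) = O(2^{-j} j \cdot d^{O(d)})$. Summing the geometric-times-linear series over $j \ge 0$ gives $\sum_j 2^{-j} j = O(1)$, for a total of $d^{O(d)} = \exp(O(d\log d))$. (One also needs that only finitely many buckets are relevant; since $K_1 = \B$ has width at most $2$, scales $2^{-j} > \exp(O(d\log d))\cdot 2$ never arise, and very small scales can be handled by noting the width can be at most $2$ always, so per-round regret is at most $2$ and there can be at most $\exp(O(d\log d))$ nontrivial rounds total anyway — some care is needed to make this clean, but it's routine.)

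The main obstacle I anticipate is the interplay between the choice of $r$ and the two competing bounds: \Cref{lem:expdratio} gives shrinkage $1 - \Theta(d^{-d})$ \emph{for any} $r$, while \Cref{lem:conelem}'s error term $\tfrac{2rd}{\width(K;v_t)}$ must be made smaller than $\Theta(d^{-d})$, which forces $r$ to be a factor $\exp(\Omega(d\log d))$ \emph{below} the width — and that is exactly what inflates the per-round regret-to-scale ratio from the $O(d)$ of contextual search up to $\exp(O(d\log d))$. Getting the bookkeeping right so that this blowup appears only once (multiplicatively in the final bound) rather than compounding, and confirming that the relevant range of scales $r$ is only polynomially (in $d$ and $\log T$) many dyadic values — or arguing directly that the number of ``expensive'' rounds is $\exp(O(d\log d))$ independent of $T$ — is the delicate part. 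Everything else is a direct transcription of the contextual-search template with \Cref{lem:expdratio} swapped in for exact bisection.
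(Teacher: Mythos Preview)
Your proposal is correct and follows essentially the same approach as the paper: apply \Cref{lem:expdratio} to the John-ellipsoid cut, combine with the union/intersection identity and \Cref{lem:conelem}, choose $r$ to be a dyadic approximation of $\width(K_t;v_t)/\Theta(d^{d+1})$ so the cone error is dominated, and then bucket by scale and sum $\sum_j j\,2^{-j}$. Your worries about the range of relevant scales are unnecessary---since $\width(K_t;v_t)\le 2$ only $r\le 1$ ever arises, and the series over $j\ge 0$ converges outright with no need to invoke $T$.
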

\begin{proof}
Fix a round $t$, and let $K = K_t$ be the knowledge set at time $t$. Let $E$ be the John ellipsoid of $K$ and let $p_t$ be the center of $E$. When we query the separation oracle with $p_t$, we get a hyperplane $H$ (defined by $v_t$) that passes through $p_t$ and divides $K$ into $K^{+} = K_{t+1}$ and $K^{-} = K \setminus K_{t+1}$. 

By Lemma \ref{lem:expdratio}, for any $r \geq 0$, we have that 

$$\Vol(K^{+} + r\B) \leq \left(1 - \frac{1}{10d^d}\right)\left(\Vol(K^{+} + r\B) + \Vol(K^{-} + r\B)\right)$$

Note that (as in \Cref{sect:csearch}), $\Vol(K^{+} + r\B) + \Vol(K^{-} + r\B) = \Vol(K + r\B) + \Vol((K \cap H) + r\B)$. By Lemma \ref{lem:conelem}, we have that 

$$\Vol((K \cap H) + r\B) \leq \frac{2rd}{\width(K; v_t)}\cdot \Vol(K + r\B),$$

\noindent
and thus that

$$\Vol(K^{+} + r\B) \leq \left(1 - \frac{1}{10d^d}\right)\left(1 + \frac{2dr}{\width(K; v_t)}\right)\Vol(K + r\B)$$

\noindent
In particular, if we choose $r \leq \width(K; v_t)/(100d^{d+1})$, then 

$$\Vol(K^{+} + r\B) \leq \left(1 - \frac{1}{20d^d}\right)\Vol(K + r\B).$$

The analysis now proceeds as follows. In each round, let $r = 2^{\lfloor \lg (\width(K; v_t)/100d^{d+1}) \rfloor}$ be the largest power of $2$ smaller than $w/(100d^{d+1})$. Any specific $r$ can occur in at most

$$\frac{\log(\Vol(K_0 + r\B)/\Vol(K_{T} + r\B))}{\log\left(1 - \frac{1}{20d^{d}}\right)}$$

\noindent
rounds. This in turn is at most

$$\frac{\log(\Vol(2\B)/\Vol(r\B))}{1/(20d^d)} \leq 20d^{d+1}\log(2/r)$$

\noindent
rounds, and in each such round the regret that round is at most $\width(K;v_{t}) \leq 200d^{d+1}r$. The total regret from such rounds is therefore at most

$$20d^{d+1}\log(2/r) \cdot 200d^{d+1}r = O(d^{2(d+1)} r\log(2/r)).$$

\sloppy{
Now, by our discretization, $r$ is a power of two less than $1$. Note that 
$\sum_{i=0}^{\infty} 2^{-i}\log(2/2^{-i}) = O\left(\sum_{i=0}^{\infty} 2^{-i}i\right) = O(1)$. It follows that the total regret over all rounds is at most $O(d^{2(d+1)}) = \exp(O(d\log d))$, as desired.}
\end{proof}

The remaining algorithms we study will generally query the center-of-gravity of some convex set, as opposed to the center of the John ellipsoid. This leads to the following natural question: what is the regret of the cutting-plane algorithm which always queries the center-of-gravity of $K_t$?

Kannan, Lovasz, and Simonovits (Theorem 4.1 of \cite{kannan1995isoperimetric}) show that it is possible to choose an ellipsoid $E$ satisfying $E \subseteq K \subseteq dE$ such that $E$ is centered at $\cg(K)$, so our proof of Theorem \ref{thm:expdlogd} shows that this algorithm is also an $\exp(O(d\log d))$ algorithm. However, for both this algorithm and the ellipsoid algorithm of Theorem \ref{thm:expdlogd}, we have no non-trivial lower bound on the regret. It is an interesting open question to understand what regret these algorithms actually obtain (for example, do either of these algorithms achieve $\poly(d)$ regret?).

\subsection{An \texorpdfstring{$O(d\log T)$}{O(d log T)}-regret cutting-plane algorithm}

We will now show how to obtain an $O(d\log T)$-regret cutting plane algorithm. Our algorithm will simply query the center-of-gravity of $K_t + \frac{1}{T}\B$ each round. The advantage of doing this is that we will only need to examine one scale of the contextual search potential (namely the value of $\Vol(K_t + \frac{1}{T}\B)$). The following geometric lemma shows that, as long as the width of the $K_t$ is long enough, this potential decreases by a constant fraction each step.

\begin{lemma} \label{lem:volume-reduction}
Given $K \in \Conv_d$, $u \in \Sph^{d-1}$ and $b, r \in \R$ (with $r \geq 0$), let:

\begin{itemize}
    \item $c = \cg(K + r\B)$ be the center-of-gravity of $K + r\B$,
    \item $H^+(b) = \{\bkt{u}{x-c} \geq -b\}$ be a half-space induced by a hyperplane in the direction $u$ passing within distance $b$ of the point $c$, and
    \item $K^+ = K \cap H^+(b)$ be the intersection of $K$ with this half-space.
\end{itemize}
If $r, |b| \leq \width(K,u)/(16ed)$ then
$$\Vol(K^+ + r\B) \leq 0.9 \cdot \Vol(K+r\B).$$
\end{lemma}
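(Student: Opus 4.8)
The plan is to combine the decomposition identity used throughout the contextual search analysis with the two geometric lemmas already available: Approximate Grunbaum (Theorem~\ref{thm:approx_grunbaum}) to control the ``good'' half of $K+r\B$ cut by a hyperplane near $\cg(K+r\B)$, and Lemma~\ref{lem:conelem} to control the extra error term coming from the fact that Minkowski sum and intersection do not commute.

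\textbf{Setup.} Let $\tilde K = K + r\B$ and $c = \cg(\tilde K)$. Let $H$ be the boundary hyperplane of $H^+(b)$, with normal $u$, so $H$ passes within distance $|b|$ of $c$. As in \Cref{sect:csearch}, write $K^+ = K\cap H^+(b)$ and $K^- = K \cap H^-(b)$ (the complementary half), and use the inclusion--exclusion identity $\Vol(K^+ + r\B) + \Vol(K^- + r\B) = \Vol(\tilde K) + \Vol((K\cap H) + r\B)$, since $\tilde K$ and $(K\cap H)+r\B$ are respectively the union and the intersection of $K^+ + r\B$ and $K^- + r\B$. Rearranging gives $\Vol(K^+ + r\B) = \Vol(\tilde K) + \Vol((K\cap H)+r\B) - \Vol(K^- + r\B)$, so it suffices to lower bound $\Vol(K^- + r\B)$ by a constant fraction of $\Vol(\tilde K)$ and upper bound $\Vol((K\cap H)+r\B)$ by a small fraction of $\Vol(\tilde K)$.

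\textbf{Lower bound on the small side.} Since $K^- \supseteq (\tilde K \cap H^-(b))$ shifted inward $-$ more carefully, $K^- + r\B \supseteq \tilde K \cap H^-(b)$ because every point of $\tilde K$ on the $H^-$ side can be written as $k + y$ with $k\in K$, $\|y\|\le r$, and if the point itself lies in $H^-(b)$ then $k$ lies in $H^-(b)$ possibly after absorbing the component of $y$ along $u$; the clean way is to observe $(K\cap H^-(b)) + r\B \supseteq \tilde K \cap H^-(b)$ whenever $H^-(b)$ is a halfspace, which is exactly the argument in Lemma~\ref{lem:conelem}'s proof. Hence $\Vol(K^- + r\B) \ge \Vol(\tilde K \cap H^-(b))$. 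Now apply Approximate Grunbaum (Theorem~\ref{thm:approx_grunbaum}) to $\tilde K$ with the centroid $c$ and direction $-u$: with $t = |b|$ we get $\Vol(\tilde K \cap H^-(b))/\Vol(\tilde K) \ge \frac1e - \frac{2|b|(d+1)}{\width(\tilde K; u)}$. Since $\width(\tilde K; u) = \width(K;u) + 2r \ge \width(K;u)$ and $|b| \le \width(K,u)/(16ed)$, the subtracted term is at most $\frac{2(d+1)}{16ed} \le \frac{1}{4e}$ (for $d\ge1$), so $\Vol(K^- + r\B) \ge \frac{3}{4e}\Vol(\tilde K)$.

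\textbf{Upper bound on the cross term and conclusion.} By Lemma~\ref{lem:conelem} applied with the hyperplane $H$ (direction $u$), $\Vol((K\cap H)+r\B) \le \frac{2rd}{\width(K;u)}\Vol(\tilde K) \le \frac{2d}{16ed}\Vol(\tilde K) = \frac{1}{8e}\Vol(\tilde K)$, using $r \le \width(K,u)/(16ed)$. Plugging both bounds into the rearranged identity yields
\[
\Vol(K^+ + r\B) \le \Vol(\tilde K)\left(1 + \tfrac{1}{8e} - \tfrac{3}{4e}\right) = \Vol(\tilde K)\left(1 - \tfrac{5}{8e}\right).
\]
Since $5/(8e) \approx 0.23 > 0.1$, this gives $\Vol(K^+ + r\B) \le 0.9\,\Vol(K+r\B)$, as desired. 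The main thing to be careful about is the direction of the Grunbaum inequality and the precise relationship between $\width(K;u)$ and $\width(K+r\B;u)$; I expect the only genuine obstacle is verifying the containment $(K\cap H^-(b)) + r\B \supseteq (K+r\B)\cap H^-(b)$ cleanly (it follows because $H^-(b)$ is closed under subtracting nonnegative multiples of $u$, matching the decomposition in Lemma~\ref{lem:conelem}), and then just checking the constants close with the stated $1/(16ed)$ slack. One should also double-check the edge case $b$ possibly negative: the statement allows $|b| \le \width/(16ed)$, and Approximate Grunbaum as stated needs $t \in \R_+$, so one takes $t = |b|$ and notes $H^-(b) \supseteq \{x : \bkt{u}{x-c} \le -|b|\}$ when $b \ge 0$ and contains $\{x:\bkt{u}{x-c}\le |b|\}$ when $b<0$; either way the half-space on the ``minus'' side contains the Grunbaum region with parameter $|b|$, so the bound goes through.
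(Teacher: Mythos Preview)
Your argument has a real gap: the containment $(K\cap H^-(b)) + r\B \supseteq (K+r\B)\cap H^-(b)$ that you lean on is \emph{false}, and your justification (``$H^-(b)$ is closed under subtracting nonnegative multiples of $u$'') does not prove it. Concretely, take $K=\Conv\{(0,0),(1,1),(1,-1)\}\subset\R^2$, $u=(1,0)$, $r=0.01$, and $b=0$ (so the cut is through $c=\cg(K+r\B)$, with $c_1\approx 2/3$); here $\width(K,u)=1$ and $r,|b|\le 1/(32e)$, so the hypotheses hold. The point $p=(c_1-0.001,\,c_1+0.01)$ lies in $H^-(0)$ and is within distance $0.011/\sqrt2<r$ of $(c_1+0.0045,\,c_1+0.0045)\in K$ (on the edge $y=x$), so $p\in (K+r\B)\cap H^-(0)$. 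But the nearest point of $K\cap H^-(0)$ to $p$ is the corner $(c_1,c_1)$, at distance $\sqrt{0.001^2+0.01^2}>r$, so $p\notin (K\cap H^-(0))+r\B$. The decomposition you cite from Lemma~\ref{lem:conelem} goes the other direction and does not help here.

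The paper's proof avoids this entirely by using the containment that \emph{is} trivially true: $K^+ + r\B \subseteq (K+r\B)\cap H^+(b+r)$, since $\bkt{u}{k-c}\ge -b$ and $\|y\|\le r$ force $\bkt{u}{k+y-c}\ge -(b+r)$. This gives directly
\[
\Vol(K^+ + r\B)\ \le\ \Vol(K+r\B) - \Vol\bigl((K+r\B)\cap H^-(b+r)\bigr),
\]
and Approximate Grunbaum applied to $K+r\B$ with parameter $t=|b|+r\le 2\,\width(K,u)/(16ed)$ lower bounds the subtracted term by $\tfrac{1}{2e}\Vol(K+r\B)$, yielding the $0.9$ factor. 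Your inclusion--exclusion identity and the appeal to Lemma~\ref{lem:conelem} are then unnecessary: they were only there to compensate for using the unshifted halfspace $H^-(b)$ in place of $H^-(b+r)$. If you insist on your route, the fix is exactly this shift (equivalently, the valid reverse containment is $(K+r\B)\cap H^-(b+r)\subseteq K^- + r\B$), after which your numerics still close---but at that point you have reproduced the paper's one-line argument with extra baggage.
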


\begin{proof}
Observe that $K^+ + rB \subseteq (K + rB) \cap H^+(b+r)$. If we define $H^-(b+r) = \{x \in \R^d; \bkt{u}{x-c} \leq -(b+r)\}$ then:
$$\Vol(K^+ + rB) \geq \Vol(K + rB) - \Vol((K + rB) \cap H^-(b+r) ).$$ By Theorem \ref{thm:approx_grunbaum} (Approximate Grunbaum) we have:
$$\frac{ \Vol((K + rB) \cap H^-(b+r) )}{\Vol(K + rB)} \geq \frac{1}{e} - \frac{2(d+1)}{\width(K;u)} \cdot \frac{2 \width(K,u)}{16ed} \geq \frac{1}{2e} \geq 0.1$$
\end{proof}

We can now prove that the above algorithm achieves $O(d\log T)$ regret.

\begin{theorem}\label{thm:dlogt}
The cutting-plane algorithm which queries the point $p_t = \cg\left(K_t + \frac{1}{T}\B\right)$ incurs $O(d\log T)$ regret. 
\end{theorem}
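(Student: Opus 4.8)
The plan is to track the single-scale potential $\Phi_t = \Vol\!\left(K_t + \tfrac{1}{T}\B\right)$ and show it shrinks geometrically in every round in which the algorithm could incur non-negligible regret. First I would record three crude facts. Since $K_1 = \B$ and $K_t \subseteq \B$ for all $t$, we have $\Phi_1 \le (1+\tfrac1T)^d \kappa_d \le 2^d\kappa_d$; since $w^* \in K_t$ the set $K_t + \tfrac1T\B$ always contains a translate of $\tfrac1T\B$, so $\Phi_t \ge \kappa_d/T^d$; and $\Phi_t$ is non-increasing because $K_{t+1} \subseteq K_t$. Finally, because $w^* \in K_t \subseteq \B$ and $p_t = \cg\!\left(K_t + \tfrac1T\B\right) \in K_t + \tfrac1T\B \subseteq 2\B$, the per-round regret satisfies $\bkt{w^* - p_t}{v_t} \le 3$ (in fact at most $\width(K_t;v_t) + \tfrac1T$, since the $v_t$-coordinate of $\cg(K_t+\tfrac1T\B)$ is at least $\min_{x\in K_t}\bkt{x}{v_t} - \tfrac1T$).

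Next I would split the rounds by the width of $K_t$ in the separation direction. Call round $t$ \emph{narrow} if $\width(K_t;v_t) \le 16ed/T$ and \emph{wide} otherwise. For a narrow round the per-round regret is at most $\width(K_t;v_t) + \tfrac1T = O(d/T)$, and there are at most $T$ rounds in total, so narrow rounds contribute only $O(d)$ to the overall regret.

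For a wide round I would invoke Lemma \ref{lem:volume-reduction} with $K = K_t$, $u = v_t$, $r = 1/T$, and $b = 0$: the query point $p_t$ is \emph{exactly} $c = \cg(K_t + r\B)$, so the halfspace returned by the oracle, $\{w : \bkt{w - p_t}{v_t} \ge 0\}$, is precisely $H^+(0)$, and $K_{t+1} = K_t \cap H^+(0) = K^+$. The hypothesis $r, |b| \le \width(K_t;v_t)/(16ed)$ holds because the round is wide and $b=0$, so the lemma yields $\Phi_{t+1} = \Vol(K^+ + r\B) \le 0.9\,\Phi_t$. Combining this with the monotonicity of $\Phi$: if $m$ is the number of wide rounds, then $\kappa_d/T^d \le \Phi_{T+1} \le 0.9^m\,\Phi_1 \le 0.9^m\cdot 2^d\kappa_d$, whence $m = O(d\log T)$. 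Since each wide round costs at most $3$, wide rounds contribute $O(d\log T)$, and the total regret is $O(d\log T) + O(d) = O(d\log T)$, as claimed.

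The argument is essentially mechanical once Lemma \ref{lem:volume-reduction} is available, so I do not anticipate a genuine obstacle; the only points needing care are matching the lemma's hypotheses to the algorithm — recognizing that querying $\cg(K_t + \tfrac1T\B)$ is exactly the $b = 0$ instance and that $p_t$ possibly lying slightly outside $K_t$ is harmless — together with the observation that fixing the single scale $r = 1/T$ is precisely what allows the "expensive" rounds to be counted against one geometric potential, rather than against the multiscale ladder used in the contextual-search analysis of \Cref{sect:csearch}.
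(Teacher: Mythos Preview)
Your proposal is correct and follows essentially the same approach as the paper: split rounds by whether $\width(K_t;v_t)$ exceeds a threshold of order $d/T$, use Lemma~\ref{lem:volume-reduction} with $b=0$, $r=1/T$ to get a constant-factor drop in $\Vol(K_t+\tfrac1T\B)$ on wide rounds, and bound the number of such drops by $O(d\log T)$ via the potential bounds $\kappa_d/T^d \le \Phi_t \le 2^d\kappa_d$. The only cosmetic differences are your threshold $16ed/T$ versus the paper's $50d/T$ and your slightly more careful accounting for $p_t$ possibly lying in $K_t+\tfrac1T\B\setminus K_t$ (the paper implicitly uses that $p_t\in K_t$, which in fact follows from the curvature-centroid representation \eqref{eq:curvature_path}).
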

\begin{proof}
We will begin by showing that if we incur more than $50 d/T$ regret in a given round, we reduce the value of $\Vol(K_t + \frac{1}{T}\B)$ by a constant factor. Since $\Vol(K_t + \frac{1}{T}\B)$ is bounded below by $\Vol(\frac{1}{T}\B)$, this will allow us to bound the number of times we incur a large amount of regret.

Consider a fixed round $t$ of this algorithm. Let $K_t$ be the knowledge set at time $t$. When we query the separation-oracle point $p_t = \cg(K + \frac{1}{T}\B)$, we obtain a half-space $H^+ = \{w \in \R^d; \bkt{w - p}{v_t} \geq 0 \}$  passing through $p_t$ which contains $w^*$. We update $K_{t+1} = K_t \cap H^+$

The regret in round $t$ is bounded by $\width(K_t, v_t)$. If the width is at least  $50d/T$ we can then apply Lemma \ref{lem:volume-reduction} with $b=0$ and $r = 1/T$ to conclude that:
\begin{equation}\label{eq:dlogtconst}
    \Vol\left(K_{t+1} + \frac{1}{T}\B\right) \leq 0.9 \cdot \Vol\left(K_t + \frac{1}{T}\B\right).
\end{equation}

 $$\Vol\left(K^{+} + \frac{1}{T}\B\right) \leq \left(1 - \frac{1}{e} + 0.2\right) \Vol\left(K + \frac{1}{T}\B\right) < 0.9 \cdot \Vol\left(K + \frac{1}{T}\B\right).$$

Now, in each round where $\width(K_t, v_t) < 50d/T$, we incur at most $ 50d/T$ regret, so in total we incur at most $T \cdot (50d/T) = 50d$ regret from such rounds. On the other hand, in other rounds we may incur up to $\norm{w^{*} - p_t} \leq 2$ regret per round. However, note that $\Vol(K_1 + \frac{1}{T}\B) = \Vol((1 + \frac{1}{T})\B) \leq 2^d \Vol(B)$, whereas for any $t$, $\Vol(K_{t} + \frac{1}{T}\B) \geq \Vol(\frac{1}{T}\B) = T^{-d}\kappa_d$. Since in each such round we shrink this quantity by at least a factor of $0.9$, it follows that the total number of such rounds is at most $O(\log(2T^{d})) = O(d\log T).$
It follows that the total regret from such rounds is at most $O(d\log T)$, and thus the overall regret of this algorithm is at most $O(d \log T)$.
\end{proof}

\section{List contextual recommendation, weak separation oracles, and the curvature path}
\label{sec:polyd}

In this section, we present two algorithms: 1. a $\poly(d)$ expected regret cutting-plane algorithm for weak separation oracles, and 2. an $O(d^2\log d)$ regret algorithm for list contextual recommendation with list size $L = \poly(d)$. 

The unifying feature of both algorithms is that they both involve analyzing a geometric object we call the \textit{curvature path} of a convex body. The \textit{curvature path} of $K$ is a bounded-degree rational curve contained within $K$ that connects the center-of-gravity $\cg(K)$ with the Steiner point ($\lim_{r \rightarrow \infty} \cg(K + r\B)$) of $K$. 

In \Cref{sec:curvature_path} we formally define the curvature path and demonstrate how to bound its length. In \Cref{sec:weak-cutting-plane}, we show that randomly querying a point on a discretization of the curvature path leads to a $\poly(d)$ regret cutting-plane algorithm for weak separation oracles. Finally, in \Cref{sec:curvature_path}, we show how to transform a discretization of the curvature path of the knowledge set into a list of actions for list contextual recommendation, obtaining a low regret algorithm. 

\subsection{The curvature path}\label{sec:curvature_path}

An important fact (driving some of the recent results in contextual search, e.g. \cite{paesleme2018contextual}) is the fact that the volume $\Vol(K+r\B)$ is a $d$-dimensional polynomial in $r$. This fact is known as the Steiner formula:
\begin{equation}\label{eq:steiner}
   \Vol(K+r\B) = \sum_{i=0}^d V_{d-i}(K) \kappa_i r^i 
\end{equation}
After normalization by the volume of the unit ball, the coefficients of this polynomial correspond to the \emph{intrinsic volumes} of $K$. The intrinsic volumes are a family of $d+1$ functionals $V_i : \Conv_d \rightarrow \R_+$ for $i=0,1,\hdots, d$ that associate for each convex $K \in \Conv_d$ a non-negative value. Some of these functionals have natural interpretations: $V_d(K)$ is the standard volume $\Vol(K)$, $V_{d-1}(K)$ is the surface area, $V_1(K)$ is the average width and $V_0(K)$ is $1$ whenever $K$ is non-empty and $0$ otherwise. 

There is an analogue of the Steiner formula for the centroid of $K+r\B$, showing that it admits a description as a vector-valued rational function. More precisely, there exist $d+1$ functions $c_i : \Conv_d \rightarrow \R^d$ for $0 \leq i \leq d$ such that:
\begin{equation}\label{eq:curvature_path}
\cg(K + r\B) = \frac{\sum_{i=0}^d V_{d-i}(K) \kappa_i r^i\cdot c_i(K)}{\sum_{i=0}^d V_{d-i}(K) \kappa_i r^i }
\end{equation}

The point $c_0(K) \in K$ corresponds to the usual centroid $\cg(K)$ and $c_d(K)$ corresponds to the Steiner point. The functionals $c_i$ are called \emph{curvature centroids} since they can be computing by integrating a certain curvature measures associated with a convex body (a la Gauss-Bonnet). We refer to Section 5.4~ in Schneider \cite{schneider2014convex} for a more thorough discussion discussion. For our purposes, however, the only important fact will be that each curvature centroid $c_i(K)$ is guaranteed to lie within $K$ (note that this is not at all obvious from their definition).


Motivated by this, given a convex body $K \subseteq  \R^d$ we define its \emph{curvature path} to be the following curve in $\R^d$:
$$\rho_K : [0,\infty] \rightarrow K \qquad \rho_K(r) = \cg(K + r\B)$$
The path connects the centroid $\rho_K(0) = \cg(K)$ to the Steiner point $\rho_K(\infty)$. Our main result will exploit the fact that the coordinates of the curvature path are rational functions of bounded degree to produce a discretization. We start by bounding the length of the path. For reasons that will become clear, it will be more convenient to bound its length when transformed by the linear map in John's Theorem.

\begin{lemma} \label{lem:pathlen}
Let $K \in \Conv_d \setminus \{ \emptyset \}$, and let $A$ be a linear transformation as in (John's) Theorem \ref{thm:john}. Then the length of the path $\{A \rho_K(r); r \in [0,\infty]\}$ is at most $4 d^3$.
\end{lemma}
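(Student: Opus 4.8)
The plan is to bound the length of the curve $r \mapsto A\rho_K(r)$ by analyzing each of its $d$ coordinate functions separately and summing. By John's theorem we may assume (after applying $A$ and translating) that $\B \subseteq K \subseteq d\B$; write $\tilde\rho(r) = A\rho_K(r)$ and note $\tilde\rho(r) = \cg(A(K) + r\B)$ still lies in $A(K) \subseteq d\B$, so each coordinate $\tilde\rho_j(r)$ takes values in $[-d, d]$ and has total variation $2d$ per monotone piece. The key structural input is the rational-function form of the curvature path from \eqref{eq:curvature_path}: each coordinate is a ratio $\tilde\rho_j(r) = P_j(r)/Q(r)$ where $Q(r) = \sum_i V_{d-i}(A(K))\kappa_i r^i$ is the Steiner polynomial (degree $\le d$) and $P_j(r) = \sum_i V_{d-i}(A(K))\kappa_i c_i(A(K))_j r^i$ also has degree $\le d$. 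Hence $\tilde\rho_j'(r)$ is a rational function whose numerator $P_j'Q - P_jQ'$ has degree at most $2d - 2$, so $\tilde\rho_j$ is piecewise monotone with at most $2d - 1$ monotone pieces on $[0,\infty]$.

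Combining these two facts: on each of its at most $2d-1$ monotone intervals, the coordinate $\tilde\rho_j$ changes by at most $2d$ (the diameter of $[-d,d]$), so the total variation of $\tilde\rho_j$ over $[0,\infty]$ is at most $(2d-1)\cdot 2d \le 4d^2$. The length of the curve $\tilde\rho$ is at most $\sum_{j=1}^d (\text{total variation of } \tilde\rho_j) \le d \cdot 4d^2 = 4d^3$, which is the claimed bound. (One should use the $\ell_1$-type bound $\length \le \sum_j \mathrm{TV}(\tilde\rho_j)$; since we are free to absorb constants, using $\|\cdot\|_1 \le \sqrt d \|\cdot\|_2$ if one wants Euclidean length only improves the constant, and the stated bound $4d^3$ already holds for the $\ell_1$ length which dominates.)

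The main obstacle I anticipate is making the degree count for $\tilde\rho_j'$ fully rigorous, in particular handling the behavior at $r = \infty$ (where $\tilde\rho_j$ should be interpreted as the limit, i.e. the $j$-th coordinate of the Steiner point $c_d$) and the possibility that $Q$ has fewer than $d+1$ nonzero coefficients — though since $\B \subseteq A(K)$ all intrinsic volumes $V_i(A(K))$ are strictly positive, $Q$ genuinely has degree $d$ and is nonvanishing on $[0,\infty)$, so $\tilde\rho_j$ is smooth there. One must also confirm that $\tilde\rho_j'$ really has at most $2d-2$ roots: writing $\tilde\rho_j' = (P_j'Q - P_jQ')/Q^2$, the numerator is a polynomial of degree $\le 2d-2$ (the leading $r^{2d-2}$ terms cancel by a direct computation, but even without that cancellation $\deg \le 2d - 1$ suffices to get $4d^3$ up to the constant, so this is not essential). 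The remaining steps — quoting \eqref{eq:curvature_path} and the fact that $c_i(K) \in K$, invoking John's theorem, and the elementary "bounded-degree rational function is piecewise monotone with boundedly many pieces" argument — are routine.
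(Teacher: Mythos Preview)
Your approach is correct and essentially identical to the paper's: bound the Euclidean length by the sum of the coordinate total variations (via $\|\cdot\|_2 \le \|\cdot\|_1$), observe from \eqref{eq:curvature_path} that each coordinate is a degree-$\le d$ rational function whose derivative's numerator has degree $\le 2d-1$, giving at most $2d$ monotone pieces, and use John's theorem to bound the variation on each piece by $2d$.

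One slip worth flagging: it is \emph{not} true that $A\rho_K(r) = \cg(A(K) + r\B)$, since $A\cg(K+r\B) = \cg(A(K) + rA(\B))$ and $A(\B) \neq \B$ in general; so your explicit formulas for $P_j$ and $Q$ in terms of $V_i(A(K))$ and $c_i(A(K))$ are incorrect. Fortunately this identification is never actually used in your argument --- the two facts you need, namely that the coordinates of $A\rho_K$ are degree-$\le d$ rationals and that $A\rho_K(r) \in A(K)$, follow directly by applying the linear map $A$ to \eqref{eq:curvature_path} and from $\rho_K(r) \in K$, respectively. The paper's proof proceeds exactly this way, without attempting to rewrite $A\rho_K$ as a curvature path of $A(K)$.
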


\begin{proof}
The length of a path is the integral of the $\ell_2$-norm of its derivative. We will bound the $\ell_2$ norm by the $\ell_1$ norm and then analyze each of its components.

\begin{equation}\label{eq:pathlen1}
\length(A \rho_K) = \int_0^\infty \Vert A \rho'_K(r) \Vert_2 dr \leq \int_0^\infty \Vert A \rho'_K(r) \Vert_1 dr = \sum_{i=1}^d \int_0^\infty \vert (A \rho'_{K}(r))_i \vert dr
\end{equation}
where $(A \rho'_{K}(r))_i$ is the $i$-th component of the vector $A \rho'_{K}(r)$. By equation \eqref{eq:curvature_path}, we know that there are degree-$d$ polynomials $p(r)$ and $q(r)$ such that $(A \rho'_{K}(r))_i = p(r) / q(r)$ where $q(r) > 0$ for all $r \geq 0$. Hence we can write its derivative as:
$(A \rho'_{K}(r))_i = (p'(r) q(r) - p(r) q'(r))/(q(r)^2)$ which can be re-written as $h(r)/q(r)^2$ for a polynomial $h(r)$ of degree at most $2d-1$. Now a polynomial of degree at most $k$ can change signs at most $k$ times. So we can partition $[0,\infty]$ into at most $2d$ intervals $I_1, \hdots, I_{2d}$ (some possibly empty) such that the sign of $(A \rho'_{K}(r))_i$ is the same within each region (treating zeros arbitrarily). If $I_j = [a_j, b_j]$, we can then write:
\begin{equation}\label{eq:pathlen2}
\int_0^\infty \vert (A \rho'_{K}(r))_i \vert dr = \sum_{j=1}^{2d} \int_{a_j}^{b_j} \vert (A \rho'_{K}(r))_i \vert = \sum_{i=1}^{2d}  \vert (A \rho_{K}(b_j))_i - (A \rho_{K}(a_j))_i \vert \leq 4d^2
\end{equation}
where the last step follows from John's theorem. Since $A( \rho_K)$ is in $A(K)$ which is contained in a ball of radius $d$, the distance between the $i$-coordinate of two points is at most $2d$. Equations \eqref{eq:pathlen1} and \eqref{eq:pathlen2} together imply the statement of the lemma. 
\end{proof}

\begin{lemma} \label{lem:curve-discretization}
Given $K \in \Conv_d$ and a discretization parameter $k$, there exists a set $D = \{p_0, p_1, \hdots, p_k\} \subset K$ such that for every $r$ there is a point $p_i \in D$ such that:
$$\abs{\bkt{\rho_K(r) - p_i}{u}} \leq \frac{4d^3}{k} \cdot \width(K,u),\; \forall u \in \Sph^{d-1}.$$
\end{lemma}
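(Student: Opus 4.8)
The plan is to use John's theorem to reduce to the case where the relevant widths are all comparable, discretize the John-transformed curvature path by arc length, and then transfer the guarantee back. First I would invoke Theorem \ref{thm:john} to obtain the linear map $A$ with $q + \B \subseteq A(K) \subseteq q + d\B$. By Lemma \ref{lem:pathlen}, the curve $r \mapsto A\rho_K(r)$ has total length at most $4d^3$. I would then place $k+1$ points $q_0, \dots, q_k$ along this curve so that consecutive points are at curve-distance at most $4d^3/k$ apart (so in particular every point $A\rho_K(r)$ on the curve is within Euclidean distance $4d^3/k$ of some $q_i$); set $p_i = A^{-1}q_i$, which lies in $K$ since $q_i \in A(K)$ and curvature centroids (hence their convex combinations, hence the whole path) stay inside $A(K)$.

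The key computation is then transferring the Euclidean bound $\norm{A\rho_K(r) - q_i}_2 \le 4d^3/k$ back to a bound on $\abs{\bkt{\rho_K(r) - p_i}{u}}$ for an arbitrary direction $u \in \Sph^{d-1}$. Writing $\bkt{\rho_K(r) - p_i}{u} = \bkt{A\rho_K(r) - q_i}{A^{-\top}u}$, Cauchy–Schwarz gives $\abs{\bkt{\rho_K(r) - p_i}{u}} \le \norm{A\rho_K(r) - q_i}_2 \cdot \norm{A^{-\top}u}_2 \le (4d^3/k)\norm{A^{-\top}u}_2$. So it remains to show $\norm{A^{-\top}u}_2 \le \width(K,u)$. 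This is where John's theorem does the real work: since $A(K)$ contains a translate of $\B$, we have $\width(A(K); w) \ge \width(\B; w) = 2 \ge 1$ for every unit vector $w$; applying this with $w = A^{-\top}u / \norm{A^{-\top}u}_2$ and using that widths transform as $\width(A(K); A^{-\top}u) = \width(K; u)$ (because $\bkt{Ax}{A^{-\top}u} = \bkt{x}{u}$), we get $\width(K;u) = \width(A(K); A^{-\top}u) = \norm{A^{-\top}u}_2 \cdot \width(A(K); A^{-\top}u/\norm{A^{-\top}u}_2) \ge \norm{A^{-\top}u}_2$. Combining the two displayed inequalities yields exactly $\abs{\bkt{\rho_K(r) - p_i}{u}} \le (4d^3/k)\width(K,u)$ for all $u$.

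The main obstacle is not any single step but getting the width-transformation identity $\width(K;u) = \width(A(K); A^{-\top}u)$ and the normalization bookkeeping exactly right — in particular remembering that $A(K) \supseteq q + \B$ gives width at least $2$ (not just at least $1$) in every direction, which is more than enough. One should also be slightly careful that the path $A\rho_K$ genuinely lies in $A(K)$: this follows from the Steiner-type formula \eqref{eq:curvature_path}, since each $\rho_K(r)$ is a convex combination of the curvature centroids $c_i(K) \in K$, so $A\rho_K(r)$ is a convex combination of the points $Ac_i(K) \in A(K)$, and $A(K)$ is convex. Everything else (existence of an arc-length net of the stated mesh size on a curve of bounded length, Cauchy–Schwarz) is routine.
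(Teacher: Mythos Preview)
Your proposal is correct and follows essentially the same route as the paper's proof: discretize the John-transformed curvature path by arc length using the length bound from Lemma~\ref{lem:pathlen}, then transfer back via $\langle x, u\rangle = \langle Ax, A^{-\top}u\rangle$, Cauchy--Schwarz, and the bound $\norm{A^{-\top}u}_2 \le \width(K,u)$ obtained from the width-transformation identity together with $q+\B \subseteq A(K)$. Your write-up is in fact slightly more careful than the paper's in two places: you explicitly justify $p_i \in K$ via the convex-combination structure of \eqref{eq:curvature_path}, and you note that $A(K) \supseteq q + \B$ gives width at least $2$ rather than merely $1$.
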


\begin{proof}
Discretize the path $A \rho_k$ into $k$ pieces of equal length and let $A p_0, A p_1, \hdots, A p_k$ correspond to the endpoints. Let $D = \{p_0, p_1, \hdots, p_k\}$. We know by \Cref{lem:pathlen} that for any $p= \rho_K(r)$, there exists a $p_i \in D$ such that: $\norm{Ap_i - A p}_2 \leq 4d^3/k$.

Now, for each unit vector $u \in \Sph^{d-1}$, we have:
\[
\abs{\bkt{u}{ p_i - p}} \leq \bkt{A^{-T} u}{ A(p_i - p)} \leq \norm{ A^{-T} u } \cdot 
\norm{ A(p_i - p) } \leq \norm{ A^{-T} u } \cdot 4d^3/k \]

Finally, we argue that $\norm{ A^{-T} u } \leq   \width(K;u)$. 
Let $v = (A^{-T} u) / \norm{ A^{-T} u } $ and take $x,y \in K$ that  certify the 
width of $K$ in direction $u$: 
$$\width(K,u) = \bkt{u}{x-y} = \bkt{A^{-T} u}{ Ax - Ay} = \norm{ A^{-T} u } \cdot \bkt{v}{ Ax - Ay}
$$

Finally note that $Ax$ and $Ay$ are respectively the maximizer and minimizer of $\bkt{v}{z}$ for $z \in A(K)$ since: $\max_{z \in A(K)} \bkt{v}{z} = \max_{x \in K} \bkt{v}{Ax} = \max_{x \in K} \bkt{A^T v}{x} = \max_{x \in K} \bkt{u }{x} / \norm{A^{-T}u}$. This implies that $\bkt{v}{Ax - Ay} = \width(A(K),v) \geq 1$ by John's Theorem since $q + \B \subseteq A(K)$. This completes the proof.
\end{proof}

\subsection{Low-regret cutting-plane algorithms for weak separation oracles}\label{sec:weak-cutting-plane}

In this section we show how to use the discretization of the curvature path in Lemma \ref{lem:curve-discretization} to construct a $\poly(d)$-regret cutting-plane algorithm that works against a weak separation oracle.

Recall that a weak separation oracle is a separation oracle that fixes the direction of the output hyperplane in advance (up to sign). That is, at the beginning of round $t$ the oracle fixes some direction $v_t \in \Sph^{d-1}$ and returns either $v_t$ or $-v_t$ to the learner depending on the learner's choice of query point $q_t$. 

One advantage of working with a weak separation oracle is that the width $\width(K_t; v_t)$ of the knowledge set in the direction $v_t$ is fixed and independent of the query point $p_t$ of the learner. This means that if we can guess the width, we can run essentially the standard contextual search algorithm (of \Cref{sect:csearch}) by querying any point $p_t$ that lies on the hyperplane which decreases the potential corresponding to this width by a constant factor. One good way to guess the width turns out to choose a random point belonging to a suitably fine discretization of the curvature path. 

\begin{theorem}\label{thm:weak-cutting-plane}
The cutting-plane algorithm which chooses a random point from the discretization of the curvature path of $K_t$ into $d^4$ pieces achieves a total regret of $O(d^5\log^2 d)$ against any weak separation oracle.
\end{theorem}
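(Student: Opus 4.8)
The plan is to mimic the contextual search analysis of \Cref{sect:csearch}, but replacing the exact median cut (which we cannot perform without knowing $v_t$) by a randomly sampled point on a fine discretization of the curvature path of $K_t$. The key structural fact we exploit is that, against a \emph{weak} separation oracle, the direction $v_t$ (hence the width $\width(K_t;v_t)$ and the "correct" scale $r$) is fixed before we choose $p_t$; so if the discretization is fine enough, with probability $\Omega(d^{-4})$ the point we pick is close (in the $v_t$-direction, at scale proportional to the width) to the point $\cg(K_t + r\B)$ that the contextual search algorithm would have queried at the relevant scale $r$.

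First I would set up the multiscale potential exactly as in \Cref{sect:csearch}: for a parameter $r$ a power of two, track $\Phi_r(t) = \Vol(K_t + r\B)$, which is bounded below by $\kappa_d r^d$ and above by $\kappa_d 2^d$, and note that the number of rounds in which $\Phi_r$ drops by a constant factor is $O(d\log(2/r))$. Summing over the $O(\log T)$ relevant scales — actually over scales down to roughly $1/\poly(d,T)$, but the $\sum 2^{-i} i = O(1)$ trick will absorb the small scales — the total "potential-decreasing regret" is $O(d\log d)$ per unit of "effective" regret bookkeeping; the extra $\poly(d)$ and $\log^2 d$ factors in the theorem come from (i) the $d^{-4}$ success probability of the random draw (so in expectation we need $O(d^4)$ rounds at a given scale to get one good cut), and (ii) needing to be careful that a bad draw never \emph{increases} the potential and contributes only boundedly to regret. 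Concretely, fix a round $t$, let $w = \width(K_t; v_t)$ and $r \approx w/(\text{const}\cdot d)$ be the target scale. By \Cref{lem:curve-discretization} with $k = d^4$, there is a discretization point $p_i$ with $|\bkt{\rho_{K_t}(r) - p_i}{u}| \le (4d^3/d^4)\width(K_t;u) = (4/d)\width(K_t;u)$ for all $u$, in particular for $u = v_t$. So when we draw the $d^4$ points uniformly, with probability $\ge d^{-4}$ we query such a $p_i$: a point whose $v_t$-coordinate is within $O(w/d)$ of $\cg(K_t + r\B)$'s. Then the half-space returned by the weak oracle is $H^+$ with normal $v_t$ passing through $p_i$, i.e.\ within distance $b = O(w/d)$ of $c = \cg(K_t + r\B)$ in the direction $v_t$, which is exactly the hypothesis of \Cref{lem:volume-reduction} (with $r, |b| \le \width(K_t;v_t)/(16ed)$ after choosing the constants appropriately). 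Hence $\Vol(K_{t+1} + r\B) \le 0.9\,\Vol(K_t + r\B)$, a constant-factor drop at scale $r$.

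The accounting then runs as follows. Charge each round to the scale $r(t) = 2^{\lfloor \lg(w_t/(cd)) \rfloor}$ determined by its width $w_t = \width(K_t;v_t)$; the regret that round is at most $w_t = \Theta(d\, r(t))$. For a fixed scale $r$, say a round at scale $r$ is "good" if the random draw hit a suitable discretization point (an event of probability $\ge d^{-4}$ conditioned on the history, by the previous paragraph) and "bad" otherwise; a good round drops $\Phi_r$ by a factor $0.9$, and $\Phi_r$ never increases (monotonicity of Minkowski sum), so the number of good rounds at scale $r$ is at most $O(d\log(2/r))$ deterministically. By a standard argument on sums of indicators with conditional success probability $\ge d^{-4}$ (e.g.\ a martingale/optional-stopping bound, or just linearity of expectation on the stopping time), the \emph{expected} total number of rounds at scale $r$ is $O(d^4 \cdot d\log(2/r)) = O(d^5\log(2/r))$. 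Each contributes $O(d\, r)$ regret, so the expected regret from scale $r$ is $O(d^6\, r\log(2/r))$; summing over scales $r = 2^{-i}$, $\sum_i 2^{-i}\log(2\cdot 2^i) = O(1)$, giving... wait, that's $O(d^6)$, too lossy — so I would instead note the contextual search bound only needs scales with $r \ge 1/T$ or down to the resolution where widths are negligible, \emph{and} that at a fixed scale a good cut costs $O(d\,r)$ while the "bad" rounds at that scale must be interleaved so their count is also controlled; tightening the per-scale regret to $O(d^4 \log(2/r)) \cdot O(d r)\cdot$(correcting the good-round count to $O(\log(2/r))$ rather than $O(d\log(2/r))$ by a more careful volume argument at a single scale) recovers $O(d^5 \log^2 d)$ after the $\sum 2^{-i} i^2$-type sum over the $O(\log d)$ or $O(\log T)$ relevant scales.

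The main obstacle is exactly this last bookkeeping step: getting the polynomial dependence down to $d^5$ and the logarithmic dependence to $\log^2 d$ requires being careful about three things simultaneously — (a) the $d^{-4}$ sampling overhead from \Cref{lem:curve-discretization} being \emph{multiplicative} on the round count at each scale, (b) ensuring bad draws are provably harmless (they never increase any $\Phi_r$, and each contributes at most $O(d\,r)$ — or one must argue widths can't stay large for too long without a good draw eventually occurring), and (c) the geometric sum over scales. I would handle (b) by the monotonicity of $r \mapsto \Vol(K+r\B)$ and by a potential/stopping-time argument bounding the expected number of rounds spent at a large scale before the potential there is exhausted; the number of distinct relevant scales is $O(\log(dT))$ but the $\sum 2^{-i}\poly(i)$ sums are $O(1)$, so the final bound is $\poly(d)$ times a $\polylog$ factor, which the authors have optimized to $O(d^5\log^2 d)$.
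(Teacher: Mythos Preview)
Your approach is essentially the paper's: fix the scale $r \approx \width(K_t;v_t)/(cd)$ determined by the (pre-committed) direction $v_t$, observe that one of the $\Theta(d^4)$ discretization points is within $O(\width(K_t;v_t)/d)$ of $\cg(K_t+r\B)$ in the $v_t$-direction by \Cref{lem:curve-discretization}, and hence with probability $\Omega(d^{-4})$ the potential $\Vol(K_t+r\B)$ drops by a constant factor via \Cref{lem:volume-reduction}. The per-scale expected round count $O(d^5\log(2/r))$ you derive is exactly what the paper gets.

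The gap is in your final sum. You correctly get $O(d^6\, r\log(2/r))$ per scale and then $O(d^6)$ total, and realize this is one factor of $d$ too many; but both of your proposed fixes are wrong. Reducing the good-round count from $O(d\log(2/r))$ to $O(\log(2/r))$ is not possible: the ratio $\Vol(K_1+r\B)/\Vol(r\B)$ genuinely is $\Theta((2/r)^d)$, so you need $\Theta(d\log(2/r))$ constant-factor drops to exhaust the potential at scale $r$. And there is no reason only $O(\log d)$ scales are ``relevant.''

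The actual fix, which the paper uses, is that the per-round regret is not $O(dr)$ but $\min(O(dr),2)$: everything lives in the unit ball, so $\width(K_t;v_t)\le 2$ regardless of $r$. Splitting the sum at $i=\lceil\log d\rceil$ (i.e.\ $r\approx 1/d$):
\[
\sum_{i=0}^{\lceil\log d\rceil} O(i\,d^5)\cdot O(1) \;+\; \sum_{i>\lceil\log d\rceil} O(i\,d^5)\cdot O(d\,2^{-i}) \;=\; O(d^5\log^2 d)\;+\;O(d^5\log d),
\]
since $\sum_{i>\log d} i\,2^{-i} = O((\log d)/d)$. The $\log^2 d$ in the theorem comes entirely from the first (large-$r$) block, where the cap at $2$ is doing the work.
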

\begin{proof}
Consider a fixed round $t$. Let $v_t$ be the direction fixed by the weak separation-oracle and let $\omega = \width(K_t;v_t)$. Let $r = 2^{\lceil \lg(\omega/16ed) \rceil}$ (rounding $\omega/16ed$ to the nearest power of two). 

If we could choose the point $p_t = \rho_{K_t}(r) = \cg(K_t + r\B)$, then by \Cref{lem:volume-reduction}, any separating hyperplane through $p_t$ would decrease this potential by a constant factor. However, we do not know $r$. Instead, we will choose a random point from the discretization $D$ of the curvature path of $K_t$ into $O(d^4)$ pieces, and argue that by Lemma \ref{lem:curve-discretization} one of these points will be close enough to $\rho_{K_t}(r)$ to make the argument go through. 

Formally, let $D$ be the discretization of $\rho_{K_t}$ into $64ed^4$ pieces as per \Cref{lem:curve-discretization}. By \Cref{lem:curve-discretization}, there then exists a point $p_i \in D$ that satisfies

\begin{equation}\label{eq:discretization_bound}
|\langle \rho_K(r) - p_i, v_t \rangle| \leq \frac{1}{16ed}\cdot\width(K_t;v_t).
\end{equation}

Let $H$ be a hyperplane through $p_i$ in the direction $v_t$ (i.e. $H = \{\bkt{w - p_i}{v_t} = 0\}$), and let $H$ divide $K_t$ into the two regions $K^{+}$ and $K^{-}$. By \Cref{lem:volume-reduction} (with $b = \bkt{\rho_{K}(r) - p_i}{v_t}$), since \eqref{eq:discretization_bound} holds, we have that

\begin{equation}\label{eq:wcp-vol-reduction}
    \Vol(K^+ + r\B) \leq 0.9 \cdot \Vol(K+r\B).
\end{equation}

Now, consider the algorithm which queries a random point in $D$. With probability $1/|D| = \Omega(d^{-4})$, equation \eqref{eq:wcp-vol-reduction} holds. Otherwise, it is still true that $\Vol(K^+ + r\B) \leq \cdot \Vol(K+r\B)$. Therefore in expectation, 

$$\E[\Vol(K_{t+1} + r\B)] \leq \left(1 - \Omega(d^{-4})\right)\E[\Vol(K_t + r\B)].$$

In particular, the total expected number of rounds we can have where $r = 2^{-i}$ is at most $di/\log(1/(1 - \Omega(d^{-4}))) = O(id^{5})$. In such a round, our maximum possible loss is at most $\width(K_t; v_t) \leq \min(20dr, 2)$. Summing over all $i$ from $0$ to $\infty$, we arrive at a total regret bound of

$$\sum_{i=0}^{\infty} O(id^{5}\min(d2^{-i}, 1)) = \sum_{i=0}^{\log d} O(id^{5}) + d^{6}\sum_{i=\log d}^{\infty} O(i2^{-i}) = O(d^{5}\log^2 d).$$
\end{proof}


\subsection{List contextual recommendation} \label{sec:contextual-recomendation}
In this section, we consider the problem of list contextual recommendation. In this variant of contextual recommendation, we are allowed to offer a list of possible actions
$L_t \subseteq \X_t$ and we measure regret against the best action in the list:
\[ \loss_t =  \bkt{w^*}{x_t^* } - \max_{x \in L_t} \bkt{w^*}{x}.\] 
Our main result is that if the list is allowed to be of size $O(d^4)$ then it is possible to achieve total regret $O(d^2 \log d)$.

The recommended list of actions will be computed as follows: given the knowledge set $K_t$, let $D$ be the discretization of the curvature path with parameter $k = 200 d^4$ obtained in Lemma \ref{lem:curve-discretization}. Then for each $p_i \in D$ find an arbitrary $x_i \in \BR(p_i) := \argmax_{x \in \X_t} \bkt{p_i}{x} $ and let $L_t = \{x_1, x_2, \hdots, x_k\}$.


\begin{theorem} \label{thm:low-regret-list}
There exists an algorithm which plays the list $L_t$ defined above and incurs a total regret of at most $O(d^2 \log d)$. 
\end{theorem}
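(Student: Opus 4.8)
The plan is to run the contextual-search-style volume-potential argument of \Cref{thm:dlogt} and \Cref{thm:weak-cutting-plane}, exploiting the fact that offering an entire list lets us commit, in a single round, to $\BR(p_i)$ for \emph{every} point $p_i$ in a discretization of the curvature path. Two observations make this work. First, since $x_i \in \BR(p_i)$ and the feedback $x_t^*$ satisfies $\bkt{w}{x_t^*}\geq\bkt{w}{x}$ for all $x\in\X_t$ and all $w\in K_{t+1}$, the updated knowledge set $K_{t+1}$ is contained in $K_t \cap H^{+}_{v_i}(p_i)$ for \emph{every} $i$, where $v_i = (x_t^*-x_i)/\norm{x_t^*-x_i}$; so in the analysis we may freely pick whichever $p_i$ yields the best volume decrease. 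Second, the relevant cut direction $v_i$ depends on the adversarial $x_t^*$ and is unknown when we build the list, but \Cref{lem:curve-discretization} gives, for each scale $r$, a list point $p_i$ that is $\tfrac{4d^3}{k}\width(K_t;u)$-close to $\rho_{K_t}(r) = \cg(K_t+r\B)$ \emph{simultaneously in every direction} $u$ — in particular in the (unknown) direction $v_i$. With $k = 200 d^4$ the error factor $4d^3/k = 1/(50d)$ is below $1/(16ed)$, exactly the tolerance needed by \Cref{lem:volume-reduction}. Crucially, because we play the whole list there is no randomization and hence, unlike in \Cref{thm:weak-cutting-plane}, no $1/|D|$ success-probability penalty.

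\emph{The per-round argument.} Fix a round $t$ and write $\ell = \loss_t$; if $\ell = 0$ there is nothing to charge, so assume $\ell > 0$. For $p_i \in D$ with $x_t^* \neq x_i$, set $v_i = (x_t^*-x_i)/\norm{x_t^*-x_i} \in \Sph^{d-1}$. Then $\bkt{p_i}{v_i} \leq 0$ (since $x_i$ maximizes $\bkt{p_i}{\cdot}$ over $\X_t \ni x_t^*$), while for $w\in K_{t+1}$ we have $\bkt{w}{v_i}\geq 0$, so $\bkt{w-p_i}{v_i}\geq 0$ and hence $K_{t+1} \subseteq K_t \cap H^{+}_{v_i}(p_i)$. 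Moreover, for every $i$, $\ell \leq \bkt{w^*}{x_t^*-x_i} = \norm{x_t^*-x_i}\bkt{w^*}{v_i} \leq 2\bkt{w^*}{v_i} \leq 2\bkt{w^*-p_i}{v_i} \leq 2\,\width(K_t;v_i)$, using $\bkt{p_i}{v_i}\leq 0$ and $w^*,p_i \in K_t$. Now let $r$ be the largest power of two with $r \leq \ell/(32ed)$, and let $p_i \in D$ be the point from \Cref{lem:curve-discretization} for this $r$. Then $\abs{\bkt{\cg(K_t+r\B)-p_i}{v_i}} \leq \tfrac{1}{50d}\width(K_t;v_i) \leq \tfrac{1}{16ed}\width(K_t;v_i)$, and $r \leq \ell/(32ed) \leq \width(K_t;v_i)/(16ed)$. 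Applying \Cref{lem:volume-reduction} with $u=v_i$ and $b=\bkt{\cg(K_t+r\B)-p_i}{v_i}$ (so the half-space $H^+(b)$ of that lemma is exactly $H^{+}_{v_i}(p_i)$) gives $\Vol\big((K_t\cap H^{+}_{v_i}(p_i))+r\B\big) \leq 0.9\,\Vol(K_t+r\B)$, hence $\Vol(K_{t+1}+r\B) \leq 0.9\,\Vol(K_t+r\B)$.

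\emph{Charging.} This step is identical to the accounting in \Cref{thm:weak-cutting-plane}, except the volume reduction now occurs with certainty each round. Assign to every round with $\loss_t>0$ the scale $r=r(\ell)\in\{2^{-i}\}$ above; note $r\leq 1/(16ed)$ and $\loss_t < 64edr = O(dr)$. Since $\Vol(K_t+r\B)$ is non-increasing in $t$, lies in $[\,r^d\kappa_d,\,2^d\kappa_d\,]$, and shrinks by a factor $0.9$ in every round of scale $r$, at most $O(d\log(2/r))$ rounds get scale $r$. Summing, the total regret is at most $\sum_{r=2^{-i}\leq 1/(16ed)} O(d\log(2/r))\cdot O(dr) = O(d^2)\sum_i i\,2^{-i}$, which is $O(d^2\log d)$; the list has size $|D| = k = 200d^4 = \poly(d)$, as claimed.

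\emph{Main obstacle.} The only real content is the per-round argument, and the key realization is that a list decouples the two unknowns: the cut direction $v_i$ (handled by the all-directions guarantee of \Cref{lem:curve-discretization}) and the correct scale $r$ (recovered a posteriori from the observed loss via $\width(K_t;v_i)\geq\loss_t/2$). The potential subtlety is that these two choices interact — we must fix $r$ first and then let \Cref{lem:curve-discretization} hand us $p_i$ (and thereby $v_i$), rather than the other way around — but since the width lower bound $\width(K_t;v_i)\geq\loss_t/2$ holds for \emph{all} $i$, choosing $r$ from $\ell$ alone is legitimate. Once that ordering is right, \Cref{lem:volume-reduction} and the \Cref{thm:weak-cutting-plane}-style bookkeeping finish the proof. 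The edge cases ($x_t^*\in L_t$, or $v_i$ undefined because $x_t^*=x_i$) all correspond to zero loss and require no argument.
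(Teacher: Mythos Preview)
The proposal is correct and follows essentially the same approach as the paper: dyadic-scale volume potentials $\Vol(K_t+r\B)$, the curvature-path discretization of \Cref{lem:curve-discretization}, and the volume reduction of \Cref{lem:volume-reduction}. Your organization is in fact slightly cleaner than the paper's: by setting the scale $r$ directly from the observed loss $\ell$ (using that $\ell\leq 2\,\width(K_t;v_i)$ holds for \emph{every} $i$), you avoid the paper's two-index maneuver of first choosing the minimum-width index to fix the scale and then switching to a second discretization index for the actual cut.
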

\begin{proof}

The overall structure of the proof will be as follows: we will show that for each integer $j \geq 0$, the algorithm can incur loss  between $100d \cdot 2^{-j}$ and  $200d \cdot 2^{-j}$ at most $O(jd)$ times. Hence the total loss of the algorithm can be bounded by $\sum_{j=1}^\infty O(jd) \cdot 2^{-j} d \leq O(d^2 \log d)$.\\

\emph{Potential function:} This will be done via a potential function argument. As usual, we will keep track of knowledge $K_t$ which corresponds to all possible values of $w$ that are consistent with the observations seen so far. $K_1 = \B$ and:
$$K_{t+1} = K_t \cap \left[\cap_{i \in L_t} \{w \in \R^d; \bkt{x^* - x}{w} \geq 0\} \right]$$
Associated with $K_t$ we will keep track of a family of potential functions:
$$\Phi_t^j = \Vol(K_t +  2^{-j} \B)$$
Since $K_1 \supseteq K_2  \supseteq K_3  \supseteq ...$ the potentials will be non-increasing: $\Phi_1^j \geq \Phi_2^j \geq \Phi_3^j \geq ...$. One other important property is that the potential functions are lower bounded:
\begin{equation}\label{eq:potential_lower_bound}
\Phi_j^t \geq \Vol(2^{-j}\B) = 2^{-jd} \Vol(\B)
\end{equation}
We will argue that if we can bound the loss at any given step $t$ by $200 \cdot 2^{-j}d$, then $\Phi^j_{t+1} \leq 0.9 \cdot \Phi^j_t$. Because of the lower bound in equation \ref{eq:potential_lower_bound}, this can happen at most $$O\left(\log\left(\frac{\Phi_j^1 }{ 2^{-jd} \Vol(B)}\right)\right) = O\left(\log\left(\frac{(1+2^{-j})^d\Vol(\B)}{ 2^{-jd} \Vol(B)}\right)\right) \leq O(jd)$$

\emph{Bounding the loss:} We start by bounding the loss and depending on the loss we will show a constant decrease in a corresponding potential function. Let
$$x^*\in \argmax_{x \in \X_t} \bkt{w^*}{x}$$
If $x^*$ is in the convex hull of $L_t$ then there must some of the points in $x_i \in L_t$ that is also optimal, in which case the algorithm incurs zero loss in this round and we can ignore it. Otherwise, we can assume that $x^*$ is not in the convex hull of $L_t$.

In that case, define for each $x_i \in L_t$ the vector:
$$v_i = \frac{x^* - x_i}{\norm{x^*-x_i}_2}$$
Consider the index $i$ that minimizes $\width(K; v_i)$ and use this point to bound the loss:
$$\begin{aligned}
\loss_t & = \min_{x \in L_t} \bkt{w^*}{x^*-x} \leq \bkt{w^*}{x^*-x_i} \leq \bkt{w^* - p_i}{x^* - x_i}  \\ & = \bkt{w^* - p_i}{v_i} \cdot \norm{x^* - x_i}  \leq 2\bkt{w^* -p_i}{v_i} \leq 2 \width(K, v_i)
\end{aligned}$$
The second inequality above follows from the definition of $x_i$ since $x_i \in \argmax_{x \in \X_t} \bkt{p_i}{x}$ it follows that $\bkt{p_i}{x_i - x^*} \geq 0$.\\

\emph{Charging the loss to the potential} We will now charge this loss to the potential. For that we first define an index $j$ such that:
$$j = - \left\lceil \frac{\width(K,v_i)}{100 d} \right\rceil$$
With this definition we have:
$$\loss_t \leq 2 \width(K,v_i) \leq 200 d 2^{-j}$$
Our final step is to show that the potential $\Phi_t^j$ decreases by a constant factor. For that we will use a combination of the discretization in Theorem \ref{lem:curve-discretization} and the volume reduction guarantee in Lemma \ref{lem:volume-reduction}.

First consider the point:
$$g_i = \cg(K + 2^{-j} \B)$$
Since it is on the curvature path, there is a discretized point $p_\ell \in D$ such that:
$$\vert \bkt{v_\ell}{g_i - p_\ell} \vert \leq \width(K, v_\ell)/(50 d)$$
Together with the facts that $\bkt{w^*}{v_\ell} \geq 0$ and $\bkt{p_\ell}{v_\ell} \leq 0$ we obtain that:
$$\bkt{w^* - g_i}{v_\ell} = \bkt{w^* - p_\ell}{v_\ell} + \bkt{ p_\ell - g_i}{v_\ell} \geq -\width(K, v_\ell)/(50d)$$
This in particular implies that:
$$K_{t+1} \subseteq \tilde{K}_{t+1} := K_t \cap \{w \in \R^d; \bkt{w - g_i}{v_\ell} \geq  -\width(K, v_\ell)/(50d)\}$$
We are now in the position of applying \Cref{lem:volume-reduction} with $r = 2^{-j}$. Note that $$r = 2^{-j} \leq \frac{ \width(K, v_i)}{50 d} \leq \frac{\width(K, v_\ell)}{50 d}$$
where the last inequality follows from the choice of the index $i$ as the one minimizing $\width(K, v_i)$. Applying the Theorem, we obtain that:
$$\Vol(K_{t+1} + 2^{-j}\B) \leq \Vol(\tilde{K}_{t+1} + 2^{-j}\B) \leq 0.9 \cdot \Vol(K_t + 2^{-j}\B)$$
which is the desired decrease in the $\Phi^j_t$ potential. This concludes the proof.
\end{proof}

\newcommand{\loc}{\text{loc}}
\newcommand{\cD}{\mathcal{D}}
\newcommand{\cA}{\mathcal{A}}
\newcommand{\N}{\mathbb{N}}

\section{Local Contextual Recommendation}
\label{sec:loc}

In this section, we consider the \emph{local contextual recommendation} problem, in which we may choose a list of actions $L_t \subseteq \X_t$ and our feedback is some $x^{\loc}_t$ such that $\left<x^{\loc}_t, w^*\right> \geq \max_{x \in L_t} \left<x, w^*\right>$. In other words, the feedback may not be the optimal action but it must at least be as good as the local optimum in $L_t$. The goal is the same as before: minimize the total expected regret $\E[\mathrm{Reg}] = \E\left[\sum_{t=1}^{T} \bkt{x^*_t - x_t}{w^*}\right]$ where $x^*_t \in \arg\max_{x \in \X_t}\bkt{x}{w^*}$.

It should be noted that, in this model, it is impossible to achieve non-trivial regret if the list size $|L_t|$ is only one, since the feedback will always be the unique element, providing no information at all. Below we show that it is possible to achieve bounded regret algorithm even when $|L_t| = 2$, although the regret does depend on the total number of possible actions each round, i.e. $\max_t |\X_t|$. Furthermore, we show that, even when $|L_t|$ is allowed to be as large as $2^{\Omega(d)}$, the expected regret of any algorithm remains at least $2^{\Omega(d)}$. 

\subsection{Low-regret algorithms}
\label{sec:loc-alg}

We use $[a]_+$ as a shorthand for $\max\{a, 0\}$.

Our algorithm employs a reduction similar to that of \Cref{thm:reduction}. Specifically, we prove the following:
\begin{theorem}\label{thm:reduction-local}
Suppose that $|\X_t| \leq A$ for all $t \in \mathbb{N}$, and let $H$ be any positive integer such that $2 \leq H \leq A$. Then, given a low-regret cutting-plane algorithm $\mathcal{A}$ with regret $\rho$, we can construct an $O(\rho \cdot A / (H - 1))$-regret algorithm for local contextual recommendation where the list size $|L_t|$ in each step is at most $H$. 
\end{theorem}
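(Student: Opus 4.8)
\textbf{Proof proposal for Theorem~\ref{thm:reduction-local}.}

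The plan is to mimic the reduction of \Cref{thm:reduction}, but to compensate for the fact that the feedback $x^{\loc}_t$ need not be the global optimum $x^*_t$ by offering a larger list. The key structural idea is that, given the knowledge set $K_t$ and the cutting-plane algorithm's query point $p_t$, we can use the convex subdivision of $K_t$ into the regions $R_t(x)$ (defined in the preliminaries) to partition the action set $\X_t$ into ``popular'' actions (those whose region $R_t(x)$ has large width in some fixed direction, or more precisely large measure) and ``unpopular'' ones. First I would have the learner ask $\mathcal{A}$ for its query $p_t$, compute $x_t \in \BR_t(p_t)$, and then select up to $H-1$ additional actions from $\X_t$ to fill out the list $L_t$ (so $|L_t| \le H$). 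The additional actions should be chosen to ``cover'' as many of the remaining actions in $\X_t$ as possible in the following sense: we want to guarantee that whatever action $x^{\loc}_t$ the adversary returns as feedback, either (i) the loss $\bkt{w^* - x}{w^*}$ incurred relative to the best action $x \in L_t$ is already small, or (ii) the halfspace certifying $\bkt{x^{\loc}_t}{w^*} \ge \bkt{x}{w^*}$ for the best $x \in L_t$ is a valid cut that we can feed to $\mathcal{A}$, and the loss is bounded by $O(A/(H-1))$ times the cutting-plane regret $\bkt{w^* - p_t}{v_t}$ of that cut.

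The main steps, in order, are: (1) set up the list-selection rule — given $p_t$ and $x_t = \BR_t(p_t)$, for each candidate action $x \in \X_t$ consider the direction $v_x = (x - x_t)/\norm{x - x_t}$ and the associated halfspace; group the actions of $\X_t \setminus \{x_t\}$ into at most $H-1$ groups of size at most $\lceil A/(H-1)\rceil$ (or similar), picking one representative per group to add to the list; (2) when feedback $x^{\loc}_t$ arrives, identify the best action $x^{\mathrm{best}}_t \in \arg\max_{x \in L_t} \bkt{x}{w^*}$ and note $\loss_t = \bkt{w^*}{x^*_t - x^{\mathrm{best}}_t} \le \bkt{w^*}{x^{\loc}_t - x^{\mathrm{best}}_t}$, since $x^{\loc}_t$ is at least as good as $x^{\mathrm{best}}_t$; (3) show that the feedback $x^{\loc}_t$ is in the same ``group'' as some representative $x_r \in L_t$, and bound $\bkt{w^*}{x^{\loc}_t - x_r}$ — this is where the factor $A/(H-1)$ enters, since within a group the optimal-region widths telescope and the total is controlled by the width of $K_t$ in the relevant directions; (4) feed to $\mathcal{A}$ the cut $v_t = v_{x^{\loc}_t} = (x^{\loc}_t - x_t)/\norm{x^{\loc}_t - x_t}$ (this is a valid strong-separation-oracle response exactly as in \eqref{eq:reduction_sep_oracle}, since $\bkt{p_t}{x_t} \ge \bkt{p_t}{x^{\loc}_t}$ and $\bkt{w^*}{x^{\loc}_t} \ge \bkt{w^*}{x_t}$), incurring cutting-plane regret $\bkt{w^* - p_t}{v_t} \ge \frac12 \bkt{w^*}{x^{\loc}_t - x_t}$; and (5) sum up: in rounds where $\mathcal{A}$ is invoked, the local-recommendation loss is at most $\bkt{w^*}{x^{\loc}_t - x_r}$ for the representative $x_r$ of $x^{\loc}_t$'s group plus possibly $\bkt{w^*}{x_r - x_t}$ style terms, all bounded by $O(A/(H-1))$ times $\bkt{w^* - p_t}{v_t}$, so the total is $O(\rho \cdot A/(H-1))$.

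The delicate point — and the main obstacle — is step (3)/(5): arranging the grouping so that the loss relative to the \emph{best in-list} action is genuinely controlled by $A/(H-1)$ times a \emph{single} cutting-plane regret term, rather than blowing up by $A$. The natural way to do this is to sort the actions of $\X_t$ by $\bkt{x}{p_t}$ (their value under the query point); since $x_t$ maximizes this, and within a contiguous block the spread in $\bkt{\cdot}{p_t}$ is small, one shows that for the feedback $x^{\loc}_t$ lying in some block, the representative $x_r$ of that block satisfies $\bkt{w^* - p_t}{x^{\loc}_t - x_r}$ small, hence $\bkt{w^*}{x^{\loc}_t - x_r} \le \bkt{w^*}{x^{\loc}_t - x_t} \le 2\bkt{w^* - p_t}{v_t}$ up to the covering loss. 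Getting the constant/ratio right — i.e. that choosing $H-1$ representatives out of $A$ actions yields a per-round overhead of exactly $A/(H-1)$ and not, say, $A/\log(H)$ — will require choosing the blocks to be of equal size $\lceil A/(H-1) \rceil$ and carefully accounting that the feedback action, while possibly not in $L_t$, is ``dominated in value at $w^*$'' by its block representative up to a controlled slack. I would handle the case $x^{\loc}_t = x_t$ (or $x^{\loc}_t$ already in $L_t$) separately exactly as Case~1 of \Cref{thm:reduction}, resetting $\mathcal{A}$'s state and incurring zero or negligible loss.
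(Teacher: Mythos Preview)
Your proposal has a genuine gap at step (2). You claim $\loss_t = \bkt{w^*}{x^*_t - x^{\mathrm{best}}_t} \le \bkt{w^*}{x^{\loc}_t - x^{\mathrm{best}}_t}$, but this inequality goes the wrong way: since $x^*_t$ is the \emph{global} optimum in $\X_t$, we have $\bkt{w^*}{x^*_t} \ge \bkt{w^*}{x^{\loc}_t}$, so in fact $\loss_t \ge \bkt{w^*}{x^{\loc}_t - x^{\mathrm{best}}_t}$. The feedback $x^{\loc}_t$ is only promised to beat the actions \emph{in your list}; it carries no information about $x^*_t$ unless $x^*_t$ itself happens to be in $L_t$. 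Consequently the entire chain in steps (3)--(5), which tries to bound the true regret via quantities like $\bkt{w^*}{x^{\loc}_t - x_r}$, cannot close: an adversary can always return an $x^{\loc}_t$ that barely beats your list while $x^*_t$ remains far better, and you never see a cut that charges against $\bkt{w^*}{x^*_t - x_t}$. Your grouping by $\bkt{x}{p_t}$ does not help either---closeness of actions at $p_t$ says nothing about their values at $w^*$, so the ``spread within a block'' has no bearing on the regret.

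The paper's proof avoids this obstacle by \emph{randomization}. It sets $L_t = \{x_t\} \cup L'_t$ where $x_t \in \BR_t(p_t)$ and $L'_t$ is a uniformly random subset of $\X_t$ of size $\min\{H-1,|\X_t|\}$. Then $\Pr[x^*_t \in L'_t] \ge (H-1)/A$, and on that event the feedback must satisfy $\bkt{w^*}{x^{\loc}_t} \ge \bkt{w^*}{x^*_t}$, so the cut $v_t = (x^{\loc}_t - x_t)/\norm{x^{\loc}_t - x_t}$ fed to $\mathcal{A}$ obeys $\bkt{x^{\loc}_t - x_t}{w^* - p_t} \ge \bkt{x^*_t - x_t}{w^* - p_t} \ge \bkt{x^*_t - x_t}{w^*}$. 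Taking expectations (over $L'_t$) and summing gives $2\rho \ge \tfrac{H-1}{A}\,\E\bigl[\sum_t \bkt{x^*_t - x_t}{w^*}\bigr]$, which is the stated bound. The randomness is not decorative here---it is precisely what forces the adversary's feedback to occasionally be comparable to the true optimum, something no deterministic list can guarantee.
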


Before we prove \Cref{thm:reduction-local}, notice that it can be combined with~\Cref{thm:expdlogd} and \Cref{thm:dlogt} respectively to yield the following algorithms for local contextual recommendation.

\begin{corollary} \label{cor:local-expdlod}
Suppose that $|\X_t| \leq A$ for all $t \in \mathbb{N}$, and let $H$ be any positive integer such that $2 \leq H \leq A$. Then, there is an $O\left(A / (H - 1) \cdot \exp(d \log d)\right)$-regret algorithm for local contextual recommendation where the list size $|L_t|$ in each step is at most $H$. 
\end{corollary}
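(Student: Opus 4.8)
The plan is to prove \Cref{thm:reduction-local} and then obtain \Cref{cor:local-expdlod} (and analogously the $O(d\log T)$ version) by instantiating the black-box reduction with the cutting-plane algorithms from \Cref{thm:expdlogd} and \Cref{thm:dlogt}. So the real content is in \Cref{thm:reduction-local}; \Cref{cor:local-expdlod} is immediate once that is in hand.

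For \Cref{thm:reduction-local}, the idea is to simulate a single run of the cutting-plane algorithm $\mathcal{A}$, but to be cleverer than in \Cref{thm:reduction} about how we extract a separating hyperplane from the local feedback. First I would have $\mathcal{A}$ produce a query point $p_t$, compute $x_t \in \BR_t(p_t)$, and then build the list $L_t$ greedily: start with $x_t$ in the list, and repeatedly add the action $x \in \X_t$ that is ``most competitive'' against the current list according to $p_t$ — concretely, among all $x \in \X_t$ not yet in $L_t$, pick the one maximizing $\langle p_t, x\rangle$ (or, more carefully, the one whose best-response region is largest / whose halfspace $\langle x - x', p_t\rangle$ cuts deepest), until the list has size $H$. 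The point of building a size-$H$ list this way is a pigeonhole/averaging argument: since $|\X_t| \le A$ and we include $H$ of them, and the adversary must return $x^{\loc}_t$ with $\langle x^{\loc}_t, w^*\rangle \ge \max_{x\in L_t}\langle x, w^*\rangle$, either $x^{\loc}_t \in L_t$ (and we can argue we incurred small loss, or at least bounded loss relative to the cut we make), or $x^{\loc}_t \notin L_t$. In the latter case the feedback tells us $x^{\loc}_t$ beats all $H$ listed actions under $w^*$; we then feed to $\mathcal{A}$ the halfspace $v_t = (x^{\loc}_t - x_t)/\|x^{\loc}_t - x_t\|$ exactly as in \Cref{thm:reduction}, which is valid since $\langle w^* - p_t, v_t\rangle \ge \frac{1}{\|x^{\loc}_t - x_t\|}\langle w^*, x^{\loc}_t - x_t\rangle \ge 0$ using $x_t \in \BR_t(p_t)$ and the feedback guarantee.

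The crux is the accounting. Each round of local contextual recommendation where we feed a cut to $\mathcal{A}$ costs $\mathcal{A}$ one round of regret $\langle w^* - p_t, v_t\rangle$, and as in \Cref{thm:reduction} this is at least $\frac{1}{2}\langle w^*, x^{\loc}_t - x_t\rangle$. But our local regret is measured against the \emph{global} optimum $x^*_t$, not against $x^{\loc}_t$. The key observation is that $x^{\loc}_t$ was \emph{not} in our list, yet we have the freedom to have put the ``right'' actions in the list. The clean way to make the factor $A/(H-1)$ appear: think of $x^*_t$ as a candidate we could have added. If in every round the greedy/competitive list we build ``catches'' enough of the high-value actions, then whenever $x^{\loc}_t$ is not caught, the value $\langle w^*, x^{\loc}_t\rangle$ — hence $\langle w^*, x^*_t\rangle$ — can be related to $\langle w^*, x^{\loc}_t - x_t\rangle$ up to the $A/(H-1)$ blow-up, because among $A$ actions we have sampled $H-1$ ``beyond'' $x_t$ and the adversary's choice must beat all of them. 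More precisely, I would argue: condition on the round, let $S_t \subseteq \X_t$ be the set of actions with value (under $w^*$) at least $\langle w^*, x_t\rangle$; the feedback $x^{\loc}_t$ lies in $S_t$, and we want to cover a $(H-1)/A$ fraction of $S_t$ by our list (besides $x_t$) so that the ``missed'' mass is a bounded multiple of what we do cover — then standard amortization shows the total true regret is at most $O(A/(H-1))$ times the regret charged to $\mathcal{A}$, i.e. $O(\rho \cdot A/(H-1))$.

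The main obstacle I anticipate is precisely getting this covering/amortization clean: the naive greedy-by-$\langle p_t, x\rangle$ list need not contain the actions that matter for $w^*$, and the adversary controls which $x^{\loc}_t$ it returns, so one must pick the list so that \emph{for every} possible adversarial response $x^{\loc}_t \notin L_t$, the loss $\langle w^*, x^*_t - x_t\rangle$ is controlled by $(A/(H-1)) \cdot \langle w^* - p_t, v_t\rangle$. I expect the right move is to randomize — include $x_t$ plus $H-1$ \emph{uniformly random} actions from $\X_t$ in $L_t$ — so that $\Pr[x^{\loc}_t \in L_t \text{ or a better-than-}x^{\loc}_t \text{ action is in } L_t]$ is $\Omega((H-1)/A)$, and then the adversary returning a response outside the list costs us, in expectation, only an $O(A/(H-1))$ factor over the informative cut we extract; combined with the deterministic-feedback remark (the expectation is only over our randomness) this yields the stated bound. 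Handling the case $x^{\loc}_t \in L_t$ (where no cut is produced but we may still incur loss) requires noting that in that case some listed action has value $\ge \langle w^*, x^{\loc}_t\rangle \ge \langle w^*, x^*_t\rangle$ only if $x^{\loc}_t = x^*_t$ — in general we still incur loss and must charge it, so more care is needed there, and that is the part I'd be most careful to get right.
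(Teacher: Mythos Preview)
Your plan is right that \Cref{cor:local-expdlod} is immediate from \Cref{thm:reduction-local} and \Cref{thm:expdlogd}; and after the greedy detour you do land on exactly the paper's algorithm for \Cref{thm:reduction-local}: play $x_t \in \BR_t(p_t)$ together with $H-1$ uniformly random actions $L'_t \subseteq \X_t$, and whenever $x^{\loc}_t \ne x_t$ feed $v_t = (x^{\loc}_t - x_t)/\norm{x^{\loc}_t - x_t}$ to $\mathcal{A}$. So the approach matches; what needs tightening is the accounting, where your sketch is slightly off in two places.

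First, the cut is produced whenever $x^{\loc}_t \ne x_t$, not only when $x^{\loc}_t \notin L_t$. Second, the event to track is not ``$x^{\loc}_t \in L_t$'' but ``$x^*_t \in L'_t$'', which happens with probability at least $(H-1)/A$. The clean chain is: pointwise (for every realization of the randomness),
\[
2\rho \;\ge\; \sum_t \bkt{x^{\loc}_t - x_t}{\,w^* - p_t} \;\ge\; \sum_t \max_{x' \in L'_t}\bigl[\bkt{x' - x_t}{w^*}\bigr]_+,
\]
where the first inequality is exactly the computation from \Cref{thm:reduction} (rounds with $x^{\loc}_t = x_t$ contribute $0$), and the second uses $\bkt{x^{\loc}_t}{w^*} \ge \max_{x\in L_t}\bkt{x}{w^*}$ together with $\bkt{x_t - x^{\loc}_t}{p_t} \ge 0$. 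Taking expectations and conditioning on the history up to round $t$ (so $p_t, x_t$ are fixed and $L'_t$ is a fresh uniform subset),
\[
\E\Bigl[\max_{x' \in L'_t}[\bkt{x' - x_t}{w^*}]_+\Bigr] \;\ge\; \Pr[x^*_t \in L'_t]\cdot \bkt{x^*_t - x_t}{w^*} \;\ge\; \frac{H-1}{A}\cdot \bkt{x^*_t - x_t}{w^*},
\]
and summing gives $\E[\mathrm{Reg}] \le 2\rho \cdot A/(H-1)$. In particular, your worry about ``no cut produced but we still incur loss'' dissolves: those rounds contribute $0$ to the left-hand side of the pointwise bound, and the round's true regret is still charged through the expectation via the event $\{x^*_t \in L'_t\}$. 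The greedy list construction you considered first is not needed (and, as you suspected, does not by itself give the $A/(H-1)$ factor against an adaptive adversary).
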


\begin{corollary} \label{cor:local-dlogt}
Suppose that $|\X_t| \leq A$ for all $t \in \mathbb{N}$, and let $H$ be any positive integer such that $2 \leq H \leq A$. Then, there is an $O(A / (H - 1) \cdot d\log T)$-regret algorithm for local contextual recommendation where the list size $|L_t|$ in each step is at most $H$. 
\end{corollary}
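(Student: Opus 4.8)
The plan is to follow the template of the reduction in Theorem~\ref{thm:reduction}, the one new ingredient being a \emph{random sublist} that ``catches'' the optimal action with probability $\Omega((H-1)/A)$. We run $\mathcal{A}$ alongside; in round $t$ we take its query $p_t$ and an arbitrary $x_0 \in \BR_t(p_t)$. If $|\X_t| \le H$ we play the whole set $L_t = \X_t$ (forcing the feedback $x^{\loc}_t$ to be globally optimal); otherwise we play $L_t = \{x_0\} \cup S_t$, where $S_t$ is a uniformly random $(H-1)$-element subset of $\X_t \setminus \{x_0\}$. Either way $|L_t| \le H$. On receiving $x^{\loc}_t$, if $x^{\loc}_t \neq x_0$ we return $v_t = (x^{\loc}_t - x_0)/\norm{x^{\loc}_t - x_0}$ to $\mathcal{A}$ as the separation-oracle answer to $p_t$; if $x^{\loc}_t = x_0$ we reset $\mathcal{A}$ to its state at the start of the round, as in Case~1 of Theorem~\ref{thm:reduction}. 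Exactly the computation in~\eqref{eq:reduction_sep_oracle} shows $v_t$ is valid: since $x_0 \in L_t$ we have $\bkt{w^*}{x^{\loc}_t - x_0} \ge 0$, and since $x_0 \in \BR_t(p_t)$ we have $\bkt{p_t}{x_0 - x^{\loc}_t} \ge 0$, so $\bkt{w^* - p_t}{v_t} \ge \tfrac12\bkt{w^*}{x^{\loc}_t - x_0} \ge 0$.

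The core is a per-round bound. Condition on the history through round $t-1$, so $w^*, \X_t, p_t, x_0$ are fixed and the only randomness left is $S_t$. If $x_0$ is already optimal then $\loss_t = 0$ and there is nothing to prove; otherwise set $\Delta := \bkt{w^*}{x^*_t - x_0} > 0$. The structural point is that $\loss_t$ is determined by $S_t$ alone --- the adversary's feedback only affects the knowledge-set update, not the loss. Writing $\hat x_t := \argmax_{x \in L_t}\bkt{w^*}{x}$ we have $\loss_t = \Delta - Y$, where
$$Y := \Big[\max_{a \in S_t}\bkt{w^*}{a} - \bkt{w^*}{x_0}\Big]_+ = \max_{a\in S_t}\,[\bkt{w^*}{a - x_0}]_+ = \bkt{w^*}{\hat x_t - x_0}.$$
Moreover the progress we feed $\mathcal{A}$ this round --- call it $g_t$, equal to $\bkt{w^* - p_t}{v_t}$ if $x^{\loc}_t \neq x_0$ and to $0$ otherwise --- always satisfies $g_t \ge \tfrac12 Y$: if $x^{\loc}_t = x_0$ then $x_0$ is a $w^*$-maximizer of $L_t$, so $Y = 0$; and if $x^{\loc}_t \neq x_0$ then $\bkt{w^*}{x^{\loc}_t} \ge \bkt{w^*}{\hat x_t}$, whence $g_t \ge \tfrac12\bkt{w^*}{x^{\loc}_t - x_0} \ge \tfrac12 Y$.

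It remains to lower-bound $\E[Y \mid \mathcal{F}_{t-1}]$. Since $x^*_t \ne x_0$ attains $\max_{a}[\bkt{w^*}{a-x_0}]_+ = \Delta$, we have $Y \ge \Delta\cdot\mathbf{1}[x^*_t \in S_t]$ pointwise, so $\E[Y] \ge \Delta\cdot\Pr[x^*_t \in S_t] = \Delta\cdot\tfrac{H-1}{|\X_t|-1} \ge \tfrac{H-1}{A}\Delta$ (the degenerate case $|\X_t|\le H$ again gives $\loss_t = 0$). Hence $\Delta \le \tfrac{A}{H-1}\E[Y]$, and
$$\E[\loss_t \mid \mathcal{F}_{t-1}] = \Delta - \E[Y] \le \tfrac{A}{H-1}\E[Y] \le \tfrac{2A}{H-1}\,\E[g_t \mid \mathcal{F}_{t-1}].$$
Summing over all rounds, taking expectations, and using that $\mathcal{A}$'s total regret $\sum_t g_t$ over the rounds in which it is invoked is at most $\rho$, we conclude the constructed algorithm has expected regret $O(\rho\cdot A/(H-1))$ with list size at most $H$ each round.

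The step I expect to require the most care is the treatment of rounds where $x^{\loc}_t = x_0$ (or, more generally, a list action): there we incur loss yet hand $\mathcal{A}$ no cut, so a naive round-by-round charge looks doomed. The resolution --- the one genuinely new idea over Theorem~\ref{thm:reduction} --- is that this event forces $Y = 0$, i.e.\ it coincides with $S_t$ failing to contain anything better than $x_0$, and that failure probability is exactly what the random draw of $S_t$ controls; quantitatively everything collapses to the one-line inequality $\E[Y] \ge \Pr[x^*_t\in S_t]\,\Delta$. Beyond that one should sanity-check the degenerate regimes ($|\X_t|\le H$, ties in the various $\argmax$es, randomized $\mathcal{A}$) and confirm, as in Theorem~\ref{thm:reduction}, that the reduction remains $\poly(d,T)$-time given a linear-optimization oracle over each $\X_t$.
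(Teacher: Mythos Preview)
Your proposal is correct and mirrors the paper's own route: the corollary is obtained by plugging the $O(d\log T)$ cutting-plane algorithm of \Cref{thm:dlogt} into the reduction of \Cref{thm:reduction-local}, whose proof is exactly your ``anchor action $x_0=\BR_t(p_t)$ plus uniform random $(H-1)$-sublist, feed $(x^{\loc}_t-x_0)/\norm{x^{\loc}_t-x_0}$ to $\mathcal{A}$, and charge the loss via $\Pr[x^*_t\in S_t]\ge (H-1)/A$'' argument. The only cosmetic differences are that the paper carries the $-p_t$ term through (bounding $\langle x'-x_t,\,w^*-p_t\rangle$ rather than your $\langle w^*,\,x'-x_0\rangle$) and samples $L'_t$ from all of $\X_t$ rather than $\X_t\setminus\{x_0\}$; neither changes the structure or the constants in any meaningful way.
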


Note that these algorithms work for list sizes as small as $H = 2$ but may also give a better regret bound if we allow larger lists.

We will now prove \Cref{thm:reduction-local}.

\begin{proof}[Proof of \Cref{thm:reduction-local}]
Our algorithm is similar to that of~\Cref{thm:reduction}, except that we also play $H - 1$ random actions from $\X_t$ in addition to the action determined by the answer of $\cA$. More formally, each round $t$ of our algorithm works as follows:
\begin{itemize}
\item Ask $\cA$ for its query $p_t$ to the separation oracle.
\item Let $x_t = \BR_t(p_t)$, and let $L'_t \subseteq \X_t$ be a random subset of $\X_t$ of size $\min\{H - 1, |\X_t|\}$.
\item Output the list $L_t = \{x_t\} \cup L'_t$. 
\item Let $x^{\loc}_t$ be the feedback.
\item If $x^{\loc}_t \ne x_t$, do the following:
\begin{itemize}
\item Return $v_t = (x^{\loc}_t - x_t)/\norm{x^{\loc}_t - x_t}$ to $\cA$.
\item Update the knowledge set $K_{t + 1} = \{w \in K_t \mid \left<x^{\loc}_t - x_t, w\right> \geq 0\}$.
\end{itemize}
\end{itemize}

We will now show that the expected regret of the algorithm is at most $\rho \cdot A / (H - 1)$. From the regret bound of $\cA$, the following holds regardless of the randomness of our algorithm:
\begin{align*} 
\rho \geq 
\sum_{t: x^{\loc}_t \ne x_t} \left<\frac{x^{\loc}_t - x_t}{\|x^{\loc}_t - x_t\|}, w^* - p_t \right>
&\geq \sum_{t: x^{\loc}_t \ne x_t} 0.5\left<x^{\loc}_t - x_t, w^* - p_t \right> \\
&= 0.5\left(\sum_{t} \left<x^{\loc}_t - x_t, w^* - p_t \right>\right).
\end{align*}

From the requirement of $x^{\loc}_t$, we may further bound $\left<x^{\loc}_t - x_t, w^* - p_t\right>$ by 
\begin{align*}
\left<x^{\loc}_t - x_t, w^* - p_t \right>
\geq \max_{x \in L_t} \left<x - x_t, w^* - p_t\right>
= \max_{x' \in L'_t} [\left<x' - x_t, w^* - p_t\right>]_+.
\end{align*}
Hence, from the above two inequalities, we arrive at
\begin{align*}
2\rho \geq \sum_{t} \max_{x' \in L'_t} [\left<x' - x_t, w^* - p_t\right>]_+.
\end{align*}
Next, observe that 
\begin{align*}
\E\left[\max_{x' \in L'_t} [\left<x' - x_t, w^* - p_t\right>]_+\right] &\geq \Pr[x^*_t \in L'_t] \cdot  \left<x^* - x_t, w^* - p_t\right> \\
&= \frac{|L'_t|}{|\X_t|} \cdot  \left<x^* - x_t, w^* - p_t\right> \\
&\geq \frac{H - 1}{A} \cdot \left<x^* - x_t, w^* - p_t\right>.
\end{align*}
Combining the above two inequalities, we get
\begin{align*}
2\rho \geq \frac{H - 1}{A} \cdot \E\left[\sum_{t} \left<x^*_t - x_t, w^*\right>\right].
\end{align*}
From this, we can conclude that the expected regret, which is equal to $\E\left[\sum_{t} \left<x^*_t - x_t, w^*\right>\right]$, is at most $O\left(\rho \cdot A / (H - 1)\right)$ as desired.
\end{proof}

\subsection{Lower Bound}
\label{sec:loc-lb}

We will now prove our lower bound. The overall idea of the construction is simple: we provide an action set that contains a ``reasonably good'' (publicly known) action so that, unless the optimum is selected in the list, the adversary can return this reasonably good action, resulting in the algorithm not learning any new information at all.

\begin{theorem}\label{thm:locallb}
Any algorithm for the local contextual recommendation problem that can output a list of size up to $2^{\Omega(d)}$ in each step incurs expected regret of at least $2^{\Omega(d)}$.
\end{theorem}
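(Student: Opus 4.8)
The plan is to exhibit a family of hard instances in $\R^d$ where, no matter what list of $2^{\Omega(d)}$ actions the learner proposes, the adversary can always respond with a fixed, publicly-known ``decoy'' action that is decent but not optimal, thereby leaking no information about $w^*$ while still forcing the learner to incur constant regret each round. Concretely, I would fix a large packing $\{z_1,\dots,z_N\} \subseteq \Sph^{d-1}$ of nearly-orthogonal unit vectors with $N = 2^{\Omega(d)}$ (e.g.\ a set of vectors with pairwise inner product at most $1/2$, which exists by a standard probabilistic/volume argument). The hidden $w^*$ is chosen to be (a small multiple of) one of the $z_i$'s, say $w^* = \tfrac12 z_{i^*}$, with $i^*$ unknown to the learner. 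Each round the action set $\X_t$ is the same: it consists of the decoy action $x^{\mathrm{dec}} = 0$ (value $\bkt{0}{w^*} = 0$) together with all the $z_i$'s scaled so that $\bkt{z_{i^*}}{w^*} = \tfrac12$ while $\bkt{z_j}{w^*} \le \tfrac14$ for $j \ne i^*$ (using near-orthogonality). So $x^*_t = z_{i^*}$ has value $\tfrac12$, the decoy has value $0$, and every non-optimal $z_j$ has value at most $\tfrac14 < \tfrac12$.

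The key steps, in order: (1) Build the near-orthogonal packing and verify $N = 2^{\Omega(d)}$; pin down the exact scaling so the value gap between the optimum and every other action (including the decoy) is a constant, say $\ge \tfrac14$. (2) Describe the adversary: given the learner's list $L_t$ of size $\le 2^{\Omega(d)}$ (which we take strictly smaller than $N$ — this is where the ``up to $2^{\Omega(d)}$'' in the statement is calibrated, choosing the packing size a constant factor larger than the allowed list size), if $z_{i^*} \notin L_t$ then the adversary returns the decoy $x^{\mathrm{loc}}_t = 0$, which is legal because $\bkt{0}{w^*} = 0 \ge \max_{x \in L_t}\bkt{x}{w^*}$ — indeed every action in $L_t$ has value $\le \tfrac14 \cdot \tfrac12$... wait, I need the decoy to actually dominate the list, so I would instead set the decoy's value to be, say, $\tfrac13$ (translate/rescale so the decoy beats all $z_j$, $j\ne i^*$, but loses to $z_{i^*}$); then returning the decoy is a valid response precisely when $z_{i^*} \notin L_t$. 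If $z_{i^*} \in L_t$, the round is ``wasted'' for the adversary (learner may learn something), but we argue this happens rarely. (3) Information-theoretic/averaging argument: since the instance is identical every round and the only feedback in a non-revealing round is the fixed decoy (independent of $i^*$), the learner gains no information until the first round it happens to include $z_{i^*}$ in its list; averaging over the uniform choice of $i^*$ among the $N$ candidates, any fixed (possibly randomized) list of size $\le N/2$ contains $z_{i^*}$ with probability $\le 1/2$, so in expectation the learner needs $\Omega(1)$ ``lucky'' rounds — but more importantly, conditioned on never having been lucky so far, each round the learner incurs regret $\bkt{x^*_t - x^{\mathrm{loc}}_t}{w^*} = \tfrac12 - \tfrac13 = \tfrac16$, and this persists until a lucky round. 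A clean way to conclude: run the instance for $T = N/2$ rounds; by a union bound over rounds and the uniform prior on $i^*$, with probability $\ge \tfrac12$ the learner never includes $z_{i^*}$ in any of its first $N/2 = 2^{\Omega(d)}$ lists (here I would be slightly more careful — a randomized learner's lists are random, but I can bound $\E[\#\{t \le T : z_{i^*} \in L_t\}] = \sum_t \Pr[z_{i^*}\in L_t] \le T \cdot |L_t|/N$; taking $T$ and $|L_t|$ both $\le \sqrt{N}/2$ gives this expectation $\le 1/4$, hence with probability $\ge \tfrac34$ no lucky round occurs among $\ge \sqrt{N} = 2^{\Omega(d)}$ rounds). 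In every such round the regret is $\Omega(1)$, so total expected regret is $\Omega(\sqrt{N}) = 2^{\Omega(d)}$.

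The main obstacle I anticipate is getting the quantifiers and the decoy's value exactly right so that: the decoy is a legal response whenever the optimum is absent from the list (requires the decoy to dominate every non-optimal action, using near-orthogonality to keep $\bkt{z_j}{w^*}$ small for $j\ne i^*$); the optimum genuinely beats the decoy by a constant margin (so positive regret is forced); and the bookkeeping between the packing size $N$, the allowed list length $L$, and the horizon $T$ is arranged so that ``$L$ up to $2^{\Omega(d)}$'' translates into ``regret at least $2^{\Omega(d)}$'' with matching (though different) constants in the exponents. A secondary technical point is handling randomized learners cleanly: since the adversary's strategy is oblivious (the instance and the non-revealing feedback do not depend on the learner's coins), I can fix the learner's randomness, lower-bound the expected number of non-lucky rounds over the uniform prior on $i^*$, and then apply Yao-type reasoning to pass back to a worst-case $w^*$. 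I would also double-check that restricting $\X_t \subseteq \B$ (all actions in the unit ball) is satisfiable with the chosen scalings — shrinking $w^*$ to $\tfrac12 z_{i^*}$ and keeping the $z_j$ on the unit sphere suffices, with the decoy placed at an appropriate point (not necessarily the origin) to realize the intermediate value $\tfrac13$.
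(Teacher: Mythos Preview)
Your approach is essentially the paper's: an exponential near-orthogonal packing, a fixed ``decoy'' action that beats every non-optimal packing vector but loses to the optimum, an adversary that returns the decoy whenever the optimum is absent from the list (so no information leaks), and a Yao/union-bound argument with list size and horizon both $\Theta(\sqrt{N})$ to get $2^{\Omega(d)}$ regret. The paper carries out exactly this, with the same $\sqrt{N}$ calibration.

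The one place your write-up does not quite close is the decoy construction. You note the origin fails and propose ``placing the decoy at an appropriate point to realize the intermediate value $\tfrac13$,'' but with $w^* = \tfrac12 z_{i^*}$ this asks for a fixed $x \in \B$ with $\langle x, z_{i^*}\rangle$ uniformly bounded between $\max_{j\neq i^*}\langle z_j, z_{i^*}\rangle$ and $1$ for \emph{every} $i^*$; for a generic near-orthogonal packing there is no such point, and in any case you have not exhibited one. The paper's fix is to reserve one coordinate for the decoy: take the packing $S$ inside the hyperplane $\{e_1 = 0\}$, set $w^* = 0.2\,e_1 + 0.8\,u$ for $u$ uniform in $S$, and let the decoy be $e_1$. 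Then the decoy's value is exactly $0.2$ regardless of $u$, every non-optimal $v\in S$ has value $0.8\langle v,u\rangle \le 0.08$, and the optimum has value $0.8$, so the decoy is always a legal feedback when $u\notin L_t$ and the per-round regret is at least $0.6$. With that one adjustment, the rest of your argument (including the $\sqrt{N}$ bookkeeping and the observation that until the first ``lucky'' round the trajectory is deterministic and independent of $u$) matches the paper's proof.
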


\begin{proof}
Let $S$ be any maximal set of vectors in $B_d$ such that the first coordinate is zero and the inner product between any pair of them is at most $0.1$. By standard volume argument, we have $|S| \geq 2^{\Omega(d)}$. Furthermore, let $e_1$ be the first vector in the standard basis. Consider the adversary that picks $u \in S$ uniformly at random and let $w^* = 0.2 e_1 + 0.8u$ and let $X_t = S \cup \{e_1\}$ for all $t \in \N$. The adversary feedback is as follows: if $u \notin L_t$, return $e_1$; otherwise, return $u$.

We will now argue that any algorithm occurs expected regret at least $2^{\Omega(d)}$, even when allows to output a list $L_t$ of size as large as $\lfloor \sqrt{|S|} \rfloor = 2^{\Omega(d)}$ in each step. From Yao's minimax principle, it suffices to consider only any deterministic algorithm $\cA$. Let $L^0_t$ denote the list output by $\cA$ at step $t$ if it had received feedback $e_1$ in all previous steps.

Observe also that in each step for which $u \notin L_t$, the loss of $\cA$ is at least 0.6. Furthermore, in the first $m = \lfloor 0.1 \sqrt{\abs{S}} \rfloor$ rounds, the probability that the algorithm selects $u$ in any list is at most $\frac{m \sqrt{\abs{S}}}{\abs{S}} \leq 0.1$. Hence we can bound the the expected total regret of $\cA$ as:
\begin{align*}
\E[0.6 \cdot |\{t \mid u \notin L_t\}|] \geq 0.6 m \Pr[u \notin \cup_{t=1}^m L_t] = 0.6 m \Pr[u \notin \cup_{t=1}^m L_t^0] \geq 0.6 m \cdot 0.9 \geq 2^{\Omega(d)}
\end{align*}

which concludes our proof.
\end{proof}

\bibliographystyle{plainnat}
\bibliography{main.bib}

\begin{thebibliography}{28}
\providecommand{\natexlab}[1]{#1}
\providecommand{\url}[1]{\texttt{#1}}
\expandafter\ifx\csname urlstyle\endcsname\relax
  \providecommand{\doi}[1]{doi: #1}\else
  \providecommand{\doi}{doi: \begingroup \urlstyle{rm}\Url}\fi

\bibitem[Argue et~al.(2020)Argue, Gupta, Guruganesh, and
  Tang]{argue2020chasing}
CJ~Argue, Anupam Gupta, Guru Guruganesh, and Ziye Tang.
\newblock Chasing convex bodies with linear competitive ratio.
\newblock In \emph{Proceedings of the Fourteenth Annual ACM-SIAM Symposium on
  Discrete Algorithms}, pages 1519--1524. SIAM, 2020.

\bibitem[Auer(2002)]{auer2002using}
Peter Auer.
\newblock Using confidence bounds for exploitation-exploration trade-offs.
\newblock \emph{Journal of Machine Learning Research}, 3\penalty0
  (Nov):\penalty0 397--422, 2002.

\bibitem[Awerbuch and Kleinberg(2004)]{awerbuch2004adaptive}
Baruch Awerbuch and Robert~D Kleinberg.
\newblock Adaptive routing with end-to-end feedback: Distributed learning and
  geometric approaches.
\newblock In \emph{Proceedings of the thirty-sixth annual ACM symposium on
  Theory of computing}, pages 45--53, 2004.

\bibitem[Bertsimas and Vempala(2004)]{bertsimas2004solving}
Dimitris Bertsimas and Santosh Vempala.
\newblock Solving convex programs by random walks.
\newblock \emph{Journal of the ACM (JACM)}, 51\penalty0 (4):\penalty0 540--556,
  2004.

\bibitem[Bobadilla et~al.(2013)Bobadilla, Ortega, Hernando, and
  Guti{\'e}rrez]{bobadilla2013recommender}
Jes{\'u}s Bobadilla, Fernando Ortega, Antonio Hernando, and Abraham
  Guti{\'e}rrez.
\newblock Recommender systems survey.
\newblock \emph{Knowledge-based systems}, 46:\penalty0 109--132, 2013.

\bibitem[Bubeck et~al.(2015)Bubeck, Lee, and Singh]{bubeck-lee-singh}
S{\'{e}}bastien Bubeck, Yin~Tat Lee, and Mohit Singh.
\newblock A geometric alternative to nesterov's accelerated gradient descent.
\newblock \emph{CoRR}, abs/1506.08187, 2015.
\newblock URL \url{http://arxiv.org/abs/1506.08187}.

\bibitem[Bubeck et~al.(2020)Bubeck, Klartag, Lee, Li, and
  Sellke]{bubeck2020chasing}
S{\'e}bastien Bubeck, Bo'az Klartag, Yin~Tat Lee, Yuanzhi Li, and Mark Sellke.
\newblock Chasing nested convex bodies nearly optimally.
\newblock In \emph{Proceedings of the Fourteenth Annual ACM-SIAM Symposium on
  Discrete Algorithms}, pages 1496--1508. SIAM, 2020.

\bibitem[Chu et~al.(2011)Chu, Li, Reyzin, and Schapire]{chu2011contextual}
Wei Chu, Lihong Li, Lev Reyzin, and Robert Schapire.
\newblock Contextual bandits with linear payoff functions.
\newblock In \emph{Proceedings of the Fourteenth International Conference on
  Artificial Intelligence and Statistics}, pages 208--214. JMLR Workshop and
  Conference Proceedings, 2011.

\bibitem[Cohen et~al.(2016)Cohen, Lobel, and Paes~Leme]{cohen2016feature}
Maxime~C Cohen, Ilan Lobel, and Renato Paes~Leme.
\newblock Feature-based dynamic pricing.
\newblock In \emph{Proceedings of the 2016 ACM Conference on Economics and
  Computation}, pages 817--817. ACM, 2016.

\bibitem[Gr{\"o}tschel et~al.(1981)Gr{\"o}tschel, Lov{\'a}sz, and
  Schrijver]{grotschel1981ellipsoid}
Martin Gr{\"o}tschel, L{\'a}szl{\'o} Lov{\'a}sz, and Alexander Schrijver.
\newblock The ellipsoid method and its consequences in combinatorial
  optimization.
\newblock \emph{Combinatorica}, 1\penalty0 (2):\penalty0 169--197, 1981.

\bibitem[Gy{\"o}rgy and Ottucsak(2006)]{gyorgy2006adaptive}
Andr{\'a}s Gy{\"o}rgy and Gyorgy Ottucsak.
\newblock Adaptive routing using expert advice.
\newblock \emph{The Computer Journal}, 49\penalty0 (2):\penalty0 180--189,
  2006.

\bibitem[Gy{\"o}rgy et~al.(2007)Gy{\"o}rgy, Linder, Lugosi, and
  Ottucs{\'a}k]{gyorgy2007line}
Andr{\'a}s Gy{\"o}rgy, Tam{\'a}s Linder, G{\'a}bor Lugosi, and Gy{\"o}rgy
  Ottucs{\'a}k.
\newblock The on-line shortest path problem under partial monitoring.
\newblock \emph{Journal of Machine Learning Research}, 8\penalty0 (10), 2007.

\bibitem[Kannan et~al.(1995)Kannan, Lov{\'a}sz, and
  Simonovits]{kannan1995isoperimetric}
Ravi Kannan, L{\'a}szl{\'o} Lov{\'a}sz, and Mikl{\'o}s Simonovits.
\newblock Isoperimetric problems for convex bodies and a localization lemma.
\newblock \emph{Discrete \& Computational Geometry}, 13\penalty0 (3):\penalty0
  541--559, 1995.

\bibitem[Khachiyan(1979)]{khachiyan1979polynomial}
Leonid~Genrikhovich Khachiyan.
\newblock A polynomial algorithm in linear programming.
\newblock In \emph{Doklady Akademii Nauk}, volume 244, pages 1093--1096.
  Russian Academy of Sciences, 1979.

\bibitem[Kveton et~al.(2015)Kveton, Wen, Ashkan, and
  Szepesvari]{kveton2015combinatorial}
Branislav Kveton, Zheng Wen, Azin Ashkan, and Csaba Szepesvari.
\newblock Combinatorial cascading bandits.
\newblock \emph{arXiv preprint arXiv:1507.04208}, 2015.

\bibitem[Leme and Schneider(2018)]{paesleme2018contextual}
Renato~Paes Leme and Jon Schneider.
\newblock Contextual search via intrinsic volumes.
\newblock In \emph{2018 IEEE 59th Annual Symposium on Foundations of Computer
  Science (FOCS)}, pages 268--282. IEEE, 2018.

\bibitem[Li et~al.(2010)Li, Chu, Langford, and Schapire]{li2010contextual}
Lihong Li, Wei Chu, John Langford, and Robert~E Schapire.
\newblock A contextual-bandit approach to personalized news article
  recommendation.
\newblock In \emph{Proceedings of the 19th international conference on World
  wide web}, pages 661--670, 2010.

\bibitem[Li et~al.(2011)Li, Chu, Langford, and Wang]{li2011unbiased}
Lihong Li, Wei Chu, John Langford, and Xuanhui Wang.
\newblock Unbiased offline evaluation of contextual-bandit-based news article
  recommendation algorithms.
\newblock In \emph{Proceedings of the fourth ACM international conference on
  Web search and data mining}, pages 297--306, 2011.

\bibitem[Liu et~al.(2021)Liu, Leme, and Schneider]{liu2021optimal}
Allen Liu, Renato~Paes Leme, and Jon Schneider.
\newblock Optimal contextual pricing and extensions.
\newblock In \emph{Proceedings of the 2021 ACM-SIAM Symposium on Discrete
  Algorithms (SODA)}, pages 1059--1078. SIAM, 2021.

\bibitem[Lobel et~al.(2017)Lobel, Leme, and Vladu]{lobel2016multidimensional}
Ilan Lobel, Renato~Paes Leme, and Adrian Vladu.
\newblock Multidimensional binary search for contextual decision-making.
\newblock \emph{Operations Research}, 2017.

\bibitem[Schneider(2014)]{schneider2014convex}
Rolf Schneider.
\newblock \emph{Convex bodies: the Brunn--Minkowski theory}.
\newblock Number 151. Cambridge university press, 2014.

\bibitem[Sellke(2020)]{sellke2020chasing}
Mark Sellke.
\newblock Chasing convex bodies optimally.
\newblock In \emph{Proceedings of the Fourteenth Annual ACM-SIAM Symposium on
  Discrete Algorithms}, pages 1509--1518. SIAM, 2020.

\bibitem[Song et~al.(2014)Song, Tekin, and Van Der~Schaar]{song2014online}
Linqi Song, Cem Tekin, and Mihaela Van Der~Schaar.
\newblock Online learning in large-scale contextual recommender systems.
\newblock \emph{IEEE Transactions on Services Computing}, 9\penalty0
  (3):\penalty0 433--445, 2014.

\bibitem[Talebi et~al.(2017)Talebi, Zou, Combes, Proutiere, and
  Johansson]{talebi2017stochastic}
Mohammad~Sadegh Talebi, Zhenhua Zou, Richard Combes, Alexandre Proutiere, and
  Mikael Johansson.
\newblock Stochastic online shortest path routing: The value of feedback.
\newblock \emph{IEEE Transactions on Automatic Control}, 63\penalty0
  (4):\penalty0 915--930, 2017.

\bibitem[Tang et~al.(2014)Tang, Jiang, Li, and Li]{tang2014ensemble}
Liang Tang, Yexi Jiang, Lei Li, and Tao Li.
\newblock Ensemble contextual bandits for personalized recommendation.
\newblock In \emph{Proceedings of the 8th ACM Conference on Recommender
  Systems}, pages 73--80, 2014.

\bibitem[Warlop et~al.(2018)Warlop, Lazaric, and Mary]{warlop2018fighting}
Romain Warlop, Alessandro Lazaric, and J{\'e}r{\'e}mie Mary.
\newblock Fighting boredom in recommender systems with linear reinforcement
  learning.
\newblock In \emph{Neural Information Processing Systems}, 2018.

\bibitem[Yue et~al.(2012)Yue, Broder, Kleinberg, and Joachims]{yue2012k}
Yisong Yue, Josef Broder, Robert Kleinberg, and Thorsten Joachims.
\newblock The k-armed dueling bandits problem.
\newblock \emph{Journal of Computer and System Sciences}, 78\penalty0
  (5):\penalty0 1538--1556, 2012.

\bibitem[Zou et~al.(2014)Zou, Proutiere, and Johansson]{zou2014online}
Zhenhua Zou, Alexandre Proutiere, and Mikael Johansson.
\newblock Online shortest path routing: The value of information.
\newblock In \emph{2014 American Control Conference}, pages 2142--2147. IEEE,
  2014.

\end{thebibliography}

\end{document}